\documentclass{article} % For LaTeX2e
\usepackage{iclr2027_conference,times}

\usepackage{graphicx}

\usepackage{amsmath,amsfonts,bm}

\def\eqref#1{equation~\ref{#1}}
\def\1{\bm{1}}

\DeclareMathAlphabet{\mathsfit}{\encodingdefault}{\sfdefault}{m}{sl}
\SetMathAlphabet{\mathsfit}{bold}{\encodingdefault}{\sfdefault}{bx}{n}

\usepackage{hyperref}
\hypersetup{hidelinks}
\usepackage{url}
\usepackage{booktabs}
\usepackage{amsthm}
\usepackage{amsfonts}
\usepackage{mathrsfs}
\usepackage{multirow}
\newtheorem{proposition}{Proposition}
\usepackage{amsmath}
\usepackage{float}
\usepackage{subcaption}
\usepackage{wrapfig}
\usepackage{bm}
\newcommand{\best}[1]{\bm{#1}}
\newcommand{\second}[1]{\underline{#1}}
\DeclareCaptionJustification{fulljustify}{%
  \leftskip=0pt
  \rightskip=0pt
  \parfillskip=0pt
}

\title{STOCK-JEPA: Prior-Anchored Latent Revision Representation Learning in Equity Markets}

\author{%
Yizhi Luo\textsuperscript{1}, Jiahe Yi\textsuperscript{1},
Jianhui Zhang\textsuperscript{2,3}, Shuo Sun\textsuperscript{1}\\[4pt]
{\normalfont\small \textsuperscript{1}The Hong Kong University of Science and Technology (hkust-gz.edu.cn)}\\
{\normalfont\small \textsuperscript{2}Shanghai Jiaotong University (sjtu.edu.cn)}\\
{\normalfont\small \textsuperscript{3}Beijing Zhongguancun Academy (bza.edu.cn)}
}

\iclrfinalcopy % Uncomment for camera-ready version, but NOT for submission.

\begin{document}

\maketitle
\lhead{Preprint}

% Effective representations help characterize the structure and dynamics of equity markets from high-dimensional financial data with low signal-to-noise ratio. Flexible deep models can capture complex patterns but may fit sample noise and lack explicit economic structure. Financial models provide interpretable references, yet can leave predictable information uncaptured. To combine these strengths

\begin{abstract}
Learning effective representations helps characterize the structure and dynamics of equity markets from financial data with a low signal-to-noise ratio. Black-box deep models can capture complex patterns but may overfit sample noise and lack explicit economic structure. Meanwhile, classic linear financial models provide interpretable references, but their oversimplified assumptions leave non-linear signals uncaptured. To combine the strengths of these two directions, we propose \textsc{Stock-JEPA}, a joint-embedding predictive framework that learns predictable incremental revisions relative to a point-in-time financial prior. First, we leverage a low-complexity financial model to produce fixed statistics summarizing multi-horizon return and risk. A prior projector then maps these statistics into the target encoder's latent space as an anchor. Second, we design a context-conditioned revision predictor to estimate the future representation's predictable displacement from the anchor. Separate losses update the two branches: the anchor learns from prior statistics, while the revision captures additional predictable information from historical context. Third, we freeze all representation modules and train a downstream readout, evaluating its forecasts through cross-sectional ranking and portfolio performance. Theoretically, we prove that optimal revision reduces the prior anchor's expected squared error for the same future representation by exactly $\mathbb{E}[\|\boldsymbol{\Delta}\|_2^2]$. This non-negative gain is the expected squared magnitude of the additional signal predictable from historical context. Experimentally, \textsc{Stock-JEPA} outperforms 13 strong baselines across large-scale China and U.S. equity universes on 5 key evaluation metrics. Ablation studies and representation analysis further demonstrate the value of the learned revisions for representation learning in equity markets.
\end{abstract}

\section{Introduction}
\label{sec:introduction}
Financial representation learning aims to characterize market structure and return-risk dynamics from noisy financial data, providing reusable representations for relevant tasks such as return forecasting and portfolio management. However, the low signal-to-noise nature of financial data leads to poor predictive signals, and future outcomes contain insider information unavailable to the public \citep{kyle1985continuous}. Advanced deep learning models have the ability to capture complex non-linear patterns, but the learned relationships do not automatically admit a clear economic interpretation \citep{gu2020empirical,giglio2022factor}. End-to-end training using realized future outcomes as targets may also overfit sample-specific noise, weakening out-of-sample generalization \citep{wang2021coupling}. These limitations motivate enhancing deep models with traditional financial models, which provide structured and interpretable priors.

Traditional financial factor models describe each stock's return as a weighted combination of a few shared factors, such as broad market movements, plus an asset-specific component \citep{ross1976arbitrage}. These weights, called factor exposures, indicate how strongly the stock responds to each factor. Conditional factor models allow the weights to change with market conditions and firm characteristics available at prediction time \citep{ferson1999conditioning,kelly2019characteristics}. These models provide a compact, financially grounded prior, but they are linear in form and can leave non-linear interactions unexplained. Residual learning offers a natural extension: a financial model supplies a baseline, while a deep model uses historical context to learn additional predictive structure \citep{avellaneda2010statistical,guijarro2025statistical}. However, the observed residual can mix omitted predictive signals with unpredictable noise (example in Figure~\ref{fig:synthetic_omitted_span}). If we instead predict a learned representation of the future state, the target can summarize return and risk patterns, giving the model flexibility to emphasize predictable structure over the details of realized outcomes. Joint-Embedding Predictive Architectures (JEPAs) support this approach by predicting encoded targets without reconstructing complete observations \citep{lecun2022path,assran2023selfsupervised}. To adapt JEPAs to equity markets, one major challenge is that the vanilla generic objectives do not explicitly distinguish what a financial prior can predict from what historical context can add.

This motivates a simple question: can we start from a financial prior and directly learn predictable revisions to future representations? We propose \textsc{Stock-JEPA}, a joint-embedding predictive framework that learns incremental latent revisions relative to a point-in-time financial prior. A low-complexity financial model produces fixed statistics of multi-horizon return and risk, which a prior projector maps into the target encoder's latent space to form an anchor. A context-conditioned revision predictor then uses historical observations to estimate the future representation's predictable displacement from this anchor. To prevent the two branches from compensating for each other, we introduce separate objectives: the prior branch predicts the future representation from prior statistics, while the context branch learns the remaining predictable component. We freeze all representation modules and train only a downstream readout to assess their predictive and portfolio utility. 
When predicting the same future representation, adding historical context can only maintain or reduce the minimum achievable mean squared error relative to the prior alone. The optimal revision captures this additional predictability, reducing the error by exactly the non-negative quantity
$\mathbb{E}[\|\boldsymbol{\Delta}\|_2^2]$.

We conduct experiments on full-size China and U.S. equity universes with consideration of realistic settings (e.g., chronological splits, transaction cost and long-short constraints). Among 13 baselines, \textsc{Stock-JEPA} achieves the best performance across 4 out of 5 evaluation metrics in each market, improving RankIC by 14.0\% and 15.9\% over the second-best method. Ablation studies support the value of the prior-anchored pipeline, the incremental contribution of revisions, and the portfolio benefits of combining separate objectives with an EMA anchor. Structural and temporal diagnostics reveal stock-level variation and broad revision adjustments around market events. Controlled simulation further supports our formulation: the anchor captures predictable structure from the prior, while the revision learns complementary signals from historical context, even with noisy future targets.

Our contributions are fourfold:
\begin{itemize}
    \item We formulate financial representation learning as learning predictable revisions beyond a structured prior, separating what history can help predict from what remains unpredictable.
    \item We prove that, to forecast the same future representation, adding historical context reduces the best possible squared prediction error by exactly $\mathbb{E}[\|\boldsymbol{\Delta}\|_2^2]$ when history helps predict.
    \item We introduce \textsc{Stock-JEPA}, a joint-embedding framework that combines structured financial priors with deep predictive learning through distinct anchor and revision objectives.
    \item Frozen \textsc{Stock-JEPA} representations achieve the highest mean RankIC and net Sharpe among evaluated methods on large-scale China and U.S. equity universes, with ablations supporting the revisions' incremental value and controlled simulation showing that the
    anchor and revision learn complementary return signals from noisy data.
\end{itemize}

\section{Related Work}
\label{sec:related_work}

\subsection{Financial Models and Residual Learning}
\label{sec:related_financial_models}

Conditional factor models provide a structured prior for returns. IPCA uses firm characteristics to model time-varying exposures to latent common factors \citep{kelly2019characteristics}. Residual-based strategies examine temporal signals left outside this structure. Avellaneda and Lee model idiosyncratic returns as mean-reverting processes \citep{avellaneda2010statistical}, while Deep Learning Statistical Arbitrage extracts trading signals from the histories of factor-residual portfolios \citep{guijarro2025statistical}. These strategies derive trading signals from factor-adjusted returns, but operate in return space and do not define a prior for predicting future-state representations. \textsc{Stock-JEPA} instead uses a financial anchor to define predictable revisions in representation space.

\subsection{Deep Financial Factor Models}
\label{sec:related_deep_factors}

Deep factor models combine financial structure with neural flexibility. Conditional autoencoders learn non-linear mappings from firm characteristics to factor exposures \citep{gu2021autoencoder}, while FactorVAE models latent factor distributions through variational inference \citep{duan2022factorvae}. Other methods explicitly combine financial priors with learned factors. FactorGCL extracts complementary factors from residual stock embeddings using hypergraphs and temporal contrastive learning \citep{duan2025factorgcl}, while PRISM-VQ combines expert factors with vector-quantized latent factors for stock ranking \citep{kim2026prismvq}. These methods demonstrate how financial structure can guide deep learning, but organize the learned representations as factors or exposures for return prediction. \textsc{Stock-JEPA} instead keeps a financial prior as an anchor, without requiring the learned future-state revisions to admit a factor interpretation.

\subsection{Predictive Representation Learning}
\label{sec:related_predictive}

JEPAs predict target embeddings from observable context without reconstructing raw observations \citep{assran2023selfsupervised}. TS-JEPA adapts this principle to masked time-series patches \citep{ennadir2024joint}. Financial applications learn market-level states from equity cross-sections \citep{mahns2026joint} or temporal representations of individual stocks \citep{wang2026finjepa}. Phys-JEPA partitions latent states into physical and residual components, using observed physical variables to constrain the physical component and its temporal changes \citep{nie2026physjepa}. Predicting latent targets or partitioning latent states does not itself distinguish what a prior can predict from what richer context adds. \textsc{Stock-JEPA} addresses this gap through a financial anchor and separate objectives for prediction from the prior and revision from historical context.

\section{Problem Formulation}
\label{sec:formulation}

\paragraph{Information structure.}
When predicting the same target, more historical information can only maintain or reduce the minimum achievable mean squared error \citep{banerjee2005optimality}. We study prior-relative representation learning through this standard conditional-expectation principle. At forecast origin $t$, the learner observes the fixed financial prior statistics $\mathbf{P}_{i,t}$ and historical context $\mathbf{X}_{i,t}^{\mathrm{ctx}}$; the future window $\mathbf{X}_{i,t}^{\mathrm{tgt}}$ is used only as a representation-learning target. All variables share a probability space $(\Omega,\mathcal{F},\mathbb{P})$. For population analysis, the target encoder is fixed and
$\mathbf{Z}_{i,t}^{\mathrm{tgt}}\in
L^2(\Omega,\mathcal{F},\mathbb{P};\mathbb{R}^{d_z})$.
Suppressing asset and date subscripts, define
\begin{equation}
\label{eq:formulation_information_sets}
    \mathcal{G}:=\sigma(\mathbf{P}),
    \qquad
    \mathcal{H}:=\sigma(\mathbf{P},\mathbf{X}^{\mathrm{ctx}}),
    \qquad
    \mathcal{G}\subseteq\mathcal{H}.
\end{equation}
\paragraph{Prior-relative predictability.} The prior anchor and predictable revision are
\begin{equation}
\label{eq:formulation_revision}
\begin{aligned}
    \mathbf{Z}^{\mathrm{anc}}
    &:=
    \mathbb{E}[\mathbf{Z}^{\mathrm{tgt}}\mid\mathcal{G}],
    \\
    \boldsymbol{\Delta}
    &:=
    \mathbb{E}[\mathbf{Z}^{\mathrm{tgt}}\mid\mathcal{H}]
    -
    \mathbf{Z}^{\mathrm{anc}}.
\end{aligned}
\end{equation}
The anchor captures what the prior predicts, while the revision captures what historical context adds. Their sum equals
$\mathbb{E}[\mathbf{Z}^{\mathrm{tgt}}\mid\mathcal{H}]$,
and the tower property gives
$\mathbb{E}[\boldsymbol{\Delta}\mid\mathcal{G}]=\mathbf{0}$.
This is a conditional-mean restriction, not an independence assumption.

\paragraph{Observed displacement and learning target.}
The displacement observed during representation learning decomposes as
\begin{equation}
\label{eq:formulation_displacement}
\begin{aligned}
    \mathbf{D}
    &:=
    \mathbf{Z}^{\mathrm{tgt}}-\mathbf{Z}^{\mathrm{anc}}
    =
    \boldsymbol{\Delta}+\boldsymbol{\varepsilon}^{\mathrm{lat}},
    \\
    \boldsymbol{\varepsilon}^{\mathrm{lat}}
    &:=
    \mathbf{Z}^{\mathrm{tgt}}
    -
    \mathbb{E}[\mathbf{Z}^{\mathrm{tgt}}\mid\mathcal{H}].
\end{aligned}
\end{equation}
Since
$\mathbb{E}[\boldsymbol{\varepsilon}^{\mathrm{lat}}\mid\mathcal{H}]
=\mathbf{0}$,
the optimal squared-loss predictor of $\mathbf{D}$ using $\mathcal{H}$ is
$\mathbb{E}[\mathbf{D}\mid\mathcal{H}]=\boldsymbol{\Delta}$.
Learning therefore targets the predictable component of the observed displacement. This requires neither the prior to span the true factor space nor the revision to identify a particular omitted factor. Figure~\ref{fig:synthetic_omitted_span} illustrates this decomposition and how historical context refines the prior prediction toward the conditional mean.
\begin{figure}[!htbp]
    \centering
    \includegraphics[width=0.95\linewidth]{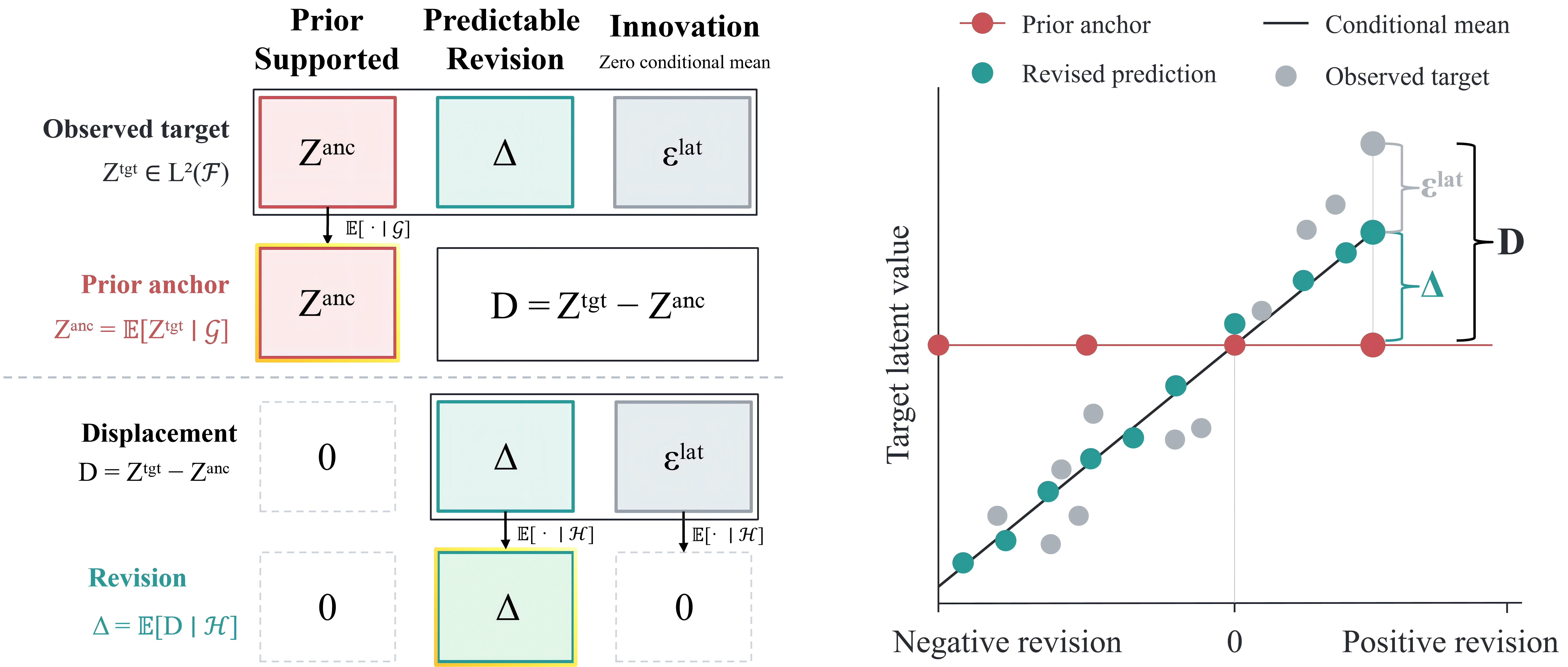}
    \caption{
    \textbf{Prior-relative predictability in latent space.}
    Left: Nested conditional projections separate the prior-supported
    anchor, predictable revision, and innovation. Conditioning the
    anchor-relative displacement on $\mathcal{H}$ yields
    $\boldsymbol{\Delta}=\mathbb{E}[\mathbf{D}\mid\mathcal{H}]$.
    Right: For a fixed prior anchor, revised predictions use historical
    context to approximate the conditional mean, while observed targets
    also contain innovations. Smaller expected squared distance to this
    mean yields lower squared prediction risk.
    }
    \label{fig:synthetic_omitted_span}
\end{figure}

\begin{proposition}[Optimal risk reduction with richer information]
\label{prop:prior_relative_projection}
For a sub-$\sigma$-algebra $\mathcal{C}\subseteq\mathcal{F}$, let
$L^2(\mathcal{C})$ denote square-integrable, $\mathcal{C}$-measurable
$\mathbb{R}^{d_z}$-valued predictors. Consider the successive objectives
\begin{equation}
\label{eq:population_separated_objectives}
\begin{aligned}
    \mathbf{a}^{\star}
    &:=
    \operatorname*{arg\,min}_{\mathbf{a}\in L^2(\mathcal{G})}
    \mathbb{E}\!\left[
        \|\mathbf{Z}^{\mathrm{tgt}}-\mathbf{a}\|_2^2
    \right],
    \\
    \mathbf{r}^{\star}
    &:=
    \operatorname*{arg\,min}_{\mathbf{r}\in L^2(\mathcal{H})}
    \mathbb{E}\!\left[
        \|\mathbf{Z}^{\mathrm{tgt}}-\mathbf{a}^{\star}-\mathbf{r}\|_2^2
    \right].
\end{aligned}
\end{equation}
Their solutions are unique up to almost-sure equality:
\begin{equation}
\label{eq:population_separated_solutions}
    \mathbf{a}^{\star}=\mathbf{Z}^{\mathrm{anc}},
    \qquad
    \mathbf{r}^{\star}=\boldsymbol{\Delta}.
\end{equation}
Writing $\mathcal{R}_{\mathcal{G}}$ and $\mathcal{R}_{\mathcal{H}}$
for the corresponding minimum population risks (expected squared prediction errors over the data distribution),
we have
\begin{equation}
\label{eq:projection_risk_decomposition}
\boxed{
    \mathcal{R}_{\mathcal{H}}
    =
    \mathcal{R}_{\mathcal{G}}
    -
    \mathbb{E}\!\left[\|\boldsymbol{\Delta}\|_2^2\right]
    \leq
    \mathcal{R}_{\mathcal{G}}.
}
\end{equation}
Equality holds if and only if
$\boldsymbol{\Delta}=\mathbf{0}$ almost surely.
\end{proposition}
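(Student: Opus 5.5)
The plan is to use the Hilbert-space structure of $L^2(\Omega,\mathcal{F},\mathbb{P};\mathbb{R}^{d_z})$ with inner product $\langle \mathbf{U},\mathbf{V}\rangle=\mathbb{E}[\mathbf{U}^\top\mathbf{V}]$. For any sub-$\sigma$-algebra $\mathcal{C}$, $L^2(\mathcal{C})$ is a closed subspace. The conditional expectation $\mathbb{E}[\cdot\mid\mathcal{C}]$, applied coordinatewise, is the orthogonal projection onto it. The fact I will use repeatedly is the orthogonality property: for $\mathbf{Y}\in L^2$ and any $\mathbf{W}\in L^2(\mathcal{C})$, $\mathbb{E}[(\mathbf{Y}-\mathbb{E}[\mathbf{Y}\mid\mathcal{C}])^\top\mathbf{W}]=0$. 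I will justify it by conditioning on $\mathcal{C}$ and pulling out the $\mathcal{C}$-measurable $\mathbf{W}$; integrability holds by Cauchy--Schwarz. From this I get the Pythagorean identity
\begin{equation*}
\mathbb{E}\|\mathbf{Y}-\mathbf{W}\|_2^2=\mathbb{E}\|\mathbf{Y}-\mathbb{E}[\mathbf{Y}\mid\mathcal{C}]\|_2^2+\mathbb{E}\|\mathbb{E}[\mathbf{Y}\mid\mathcal{C}]-\mathbf{W}\|_2^2 .
\end{equation*}
This gives existence of the minimizer, the formula for it, and uniqueness a.s.: the second term vanishes iff $\mathbf{W}=\mathbb{E}[\mathbf{Y}\mid\mathcal{C}]$ a.s.

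\textbf{Step 1 (anchor).} Apply the identity with $\mathbf{Y}=\mathbf{Z}^{\mathrm{tgt}}$ and $\mathcal{C}=\mathcal{G}$. This gives $\mathbf{a}^\star=\mathbf{Z}^{\mathrm{anc}}$ uniquely, with $\mathcal{R}_{\mathcal{G}}=\mathbb{E}\|\mathbf{Z}^{\mathrm{tgt}}-\mathbf{Z}^{\mathrm{anc}}\|_2^2=\mathbb{E}\|\mathbf{D}\|_2^2$.

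\textbf{Step 2 (revision).} With $\mathbf{a}^\star$ fixed, the second objective is $\mathbb{E}\|\mathbf{D}-\mathbf{r}\|_2^2$ over $\mathbf{r}\in L^2(\mathcal{H})$. Apply the identity with $\mathbf{Y}=\mathbf{D}$ and $\mathcal{C}=\mathcal{H}$. Since $\mathcal{G}\subseteq\mathcal{H}$, $\mathbf{Z}^{\mathrm{anc}}$ is $\mathcal{H}$-measurable, so $\mathbb{E}[\mathbf{D}\mid\mathcal{H}]=\mathbb{E}[\mathbf{Z}^{\mathrm{tgt}}\mid\mathcal{H}]-\mathbf{Z}^{\mathrm{anc}}=\boldsymbol{\Delta}$, as already noted after \eqref{eq:formulation_displacement}. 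Hence $\mathbf{r}^\star=\boldsymbol{\Delta}$ uniquely a.s.

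The risk $\mathcal{R}_{\mathcal{H}}$ is the minimum of the second objective. It equals $\mathbb{E}\|\boldsymbol{\varepsilon}^{\mathrm{lat}}\|_2^2$, which is also the minimum of $\mathbb{E}\|\mathbf{Z}^{\mathrm{tgt}}-\mathbf{h}\|_2^2$ over $\mathbf{h}\in L^2(\mathcal{H})$. The two descriptions agree because $\mathbf{h}=\mathbf{a}^\star+\mathbf{r}$ ranges over all of $L^2(\mathcal{H})$, so the risk is interpretation-independent. I would state this explicitly.

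\textbf{Step 3 (decomposition).} Use $\mathbf{D}=\boldsymbol{\Delta}+\boldsymbol{\varepsilon}^{\mathrm{lat}}$ and expand. The cross term $\mathbb{E}[\boldsymbol{\Delta}^\top\boldsymbol{\varepsilon}^{\mathrm{lat}}]$ vanishes by the orthogonality property, since $\boldsymbol{\Delta}\in L^2(\mathcal{H})$ and $\mathbb{E}[\boldsymbol{\varepsilon}^{\mathrm{lat}}\mid\mathcal{H}]=\mathbf{0}$. This yields $\mathcal{R}_{\mathcal{G}}=\mathcal{R}_{\mathcal{H}}+\mathbb{E}\|\boldsymbol{\Delta}\|_2^2$, i.e., \eqref{eq:projection_risk_decomposition}. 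The inequality is immediate, and equality holds iff $\mathbb{E}\|\boldsymbol{\Delta}\|_2^2=0$, iff $\boldsymbol{\Delta}=\mathbf{0}$ a.s.

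The only real care point is the integrability bookkeeping. Each of $\mathbf{Z}^{\mathrm{anc}}$, $\boldsymbol{\Delta}$, $\mathbf{D}$ and $\boldsymbol{\varepsilon}^{\mathrm{lat}}$ is in $L^2$, by the $L^2$-contractivity of conditional expectation (conditional Jensen). The products in the cross terms are in $L^1$, so conditioning is legitimate. Everything else is routine projection algebra.
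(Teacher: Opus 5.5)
Your proposal is correct and follows the paper's proof essentially step for step. Both arguments treat conditional expectation as orthogonal projection in $L^2$, using the Pythagorean identity for existence and a.s.\ uniqueness of $\mathbf{a}^\star$ and $\mathbf{r}^\star$, then $\mathbb{E}[\mathbf{D}\mid\mathcal{H}]=\boldsymbol{\Delta}$, and finally the orthogonal split $\mathbf{D}=\boldsymbol{\Delta}+\boldsymbol{\varepsilon}^{\mathrm{lat}}$ to get $\mathcal{R}_{\mathcal{G}}=\mathcal{R}_{\mathcal{H}}+\mathbb{E}\|\boldsymbol{\Delta}\|_2^2$. Your remark that $\mathbf{a}^\star+\mathbf{r}$ ranges over all of $L^2(\mathcal{H})$ is a slightly more explicit justification of what the paper asserts via $\mathbf{Z}^{\mathrm{anc}}+\boldsymbol{\Delta}=\mathbb{E}[\mathbf{Z}^{\mathrm{tgt}}\mid\mathcal{H}]$, namely that the second objective's minimum is the $\mathcal{H}$-optimal risk.
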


\textbf{Thus, the optimal population revision captures the additional predictability that historical context provides beyond the prior, reducing squared prediction risk by exactly the non-negative
quantity $\mathbb{E}[\|\boldsymbol{\Delta}\|_2^2]$}. When history adds no conditional-mean predictability, the optimal revision is zero and the prior prediction is recovered. The non-increase guarantee also holds for parameterized revision classes containing the zero function, with the target encoder and anchor held fixed. See Appendix~\ref{app:population_proof} for the proof and
Section~\ref{sec:Methodology} for the finite-dimensional
\textsc{Stock-JEPA} model.

\section{Methodology}
\label{sec:Methodology}

\textsc{Stock-JEPA} in Figure~\ref{fig:stock_jepa_overview} learns future-state representations by combining a structured financial prior with a prediction of what historical context can add. Following Section~\ref{sec:formulation}, it first constructs financial prior statistics, then learns a latent anchor and its predictable revision. A downstream return readout evaluates the resulting features with all representation modules frozen.
\begin{figure}[htb] 
    \centering
    \includegraphics[width=0.95\linewidth]{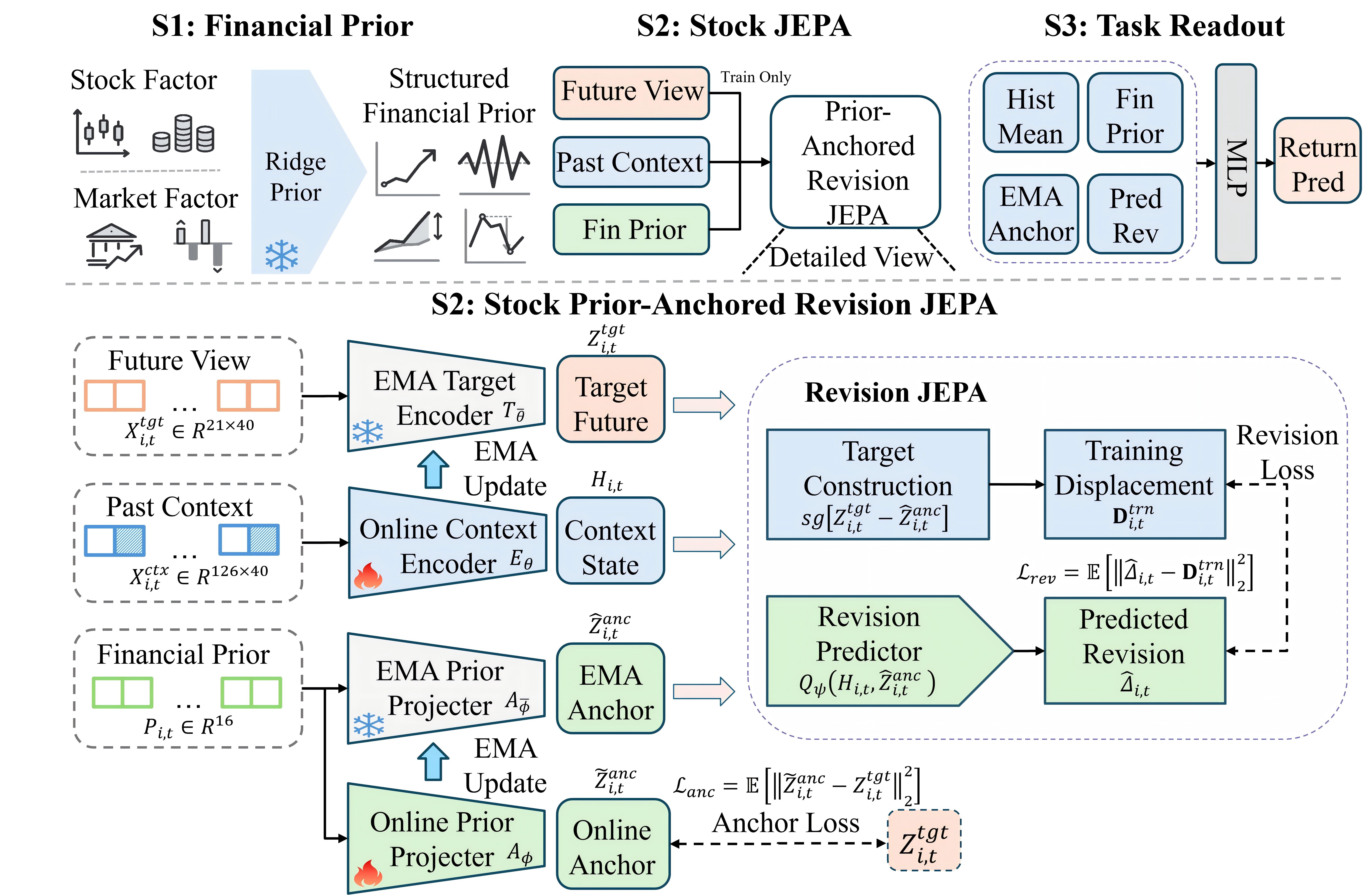} 
    \caption{ \textbf{Overview of \textsc{Stock-JEPA}.} \textbf{S1: Financial prior construction.} A financial model produces fixed prior statistics summarizing multi-horizon return and risk. \textbf{S2: Prior-anchored representation learning.} The target encoder and prior projector form future targets and financial anchors in a shared latent space. The context encoder and revision predictor combine historical information with the anchor to predict incremental revisions. Separate losses train the anchor and revision branches. \textbf{S3: Downstream return readout.} A trainable MLP combines the historical summary, prior statistics, anchor, and predicted revision, with all representation modules frozen. }
    \label{fig:stock_jepa_overview} 
\end{figure}

\subsection{Point-in-Time Structured Financial Prior}
\label{sec:structured_prior}
We construct a factor-model-inspired point-in-time baseline as a parsimonious implementation of the structured financial prior in Section~\ref{sec:formulation}. The \emph{financial prior model} uses multi-output Ridge regression to map stock and market descriptors available at time $t$ to return, volatility, drawdown, and equal-weighted market-relative return over 5- and 21-trading-day horizons (approximately one week and one month, respectively). These eight forecasts summarize financially meaningful aspects of the future window and provide the financial content of the prior. We combine the transformed point forecasts $\widehat{\mathbf{y}}_{i,t}^{\mathrm{pri}}\in\mathbb{R}^{8}$ with their coordinate-wise historical log residual scales $\boldsymbol{\sigma}_{b(t)}^{\mathrm{res}}\in\mathbb{R}^{8}$:
\begin{equation}
\label{eq:structured_prior}
    \mathbf{P}_{i,t}
    =\left[\widehat{\mathbf{y}}_{i,t}^{\mathrm{pri}};
    \boldsymbol{\sigma}_{b(t)}^{\mathrm{res}}\right]
    \in\mathbb{R}^{16}.
\end{equation}
Here, $b(t)$ denotes the forecasting block. Each block's model and residual scales use only samples whose complete outcomes are available before the block begins. The scales summarize historical fitting errors and are shared within the block; they are not estimates of sample-specific predictive uncertainty. The prior statistics are precomputed and remain fixed during representation learning.  These statistics define the structured information set $\mathcal{G}=\sigma(\mathbf{P})$. The prior projector introduced below maps this fixed prior into the target latent space, providing a finite-dimensional approximation to $\mathbb{E}[\mathbf{Z}^{\mathrm{tgt}}\mid\mathcal{G}]$ in Equation~\ref{eq:formulation_revision}. Details are provided in Appendix~\ref{app:financial_state}.

\subsection{Prior-Anchored Representation Learning}
\label{sec:revision_jepa}

The detailed S2 architecture in Figure~\ref{fig:stock_jepa_overview} follows the information structure of the formulation: the anchor branch predicts from $\mathcal{G}$, while the revision branch uses the richer history in $\mathcal{H}$. We distinguish the population anchor $\mathbf{Z}_{i,t}^{\mathrm{anc}}$ from its online estimate $\widetilde{\mathbf{Z}}_{i,t}^{\mathrm{anc}}$ and stable estimate $\widehat{\mathbf{Z}}_{i,t}^{\mathrm{anc}}$. Similarly, $\widehat{\boldsymbol{\Delta}}_{i,t}$ denotes the learned prediction of the population revision $\boldsymbol{\Delta}_{i,t}$.

\paragraph{Future target encoding and anchor projection.}
The \emph{future target encoder} $T_{\bar{\theta}}$ defines the target representation from the standardized future window $\mathbf{X}_{i,t}^{\mathrm{tgt}}\in\mathbb{R}^{21\times40}$. The \emph{financial prior projector} $A_{\phi}$ maps the prior into the same $d_z$-dimensional space:
\begin{equation}
\label{eq:target_anchor_learning}
    \mathbf{Z}_{i,t}^{\mathrm{tgt}}
    =\operatorname{sg}\!\left[
    T_{\bar{\theta}}(\mathbf{X}_{i,t}^{\mathrm{tgt}})
    \right],
    \qquad
    \widetilde{\mathbf{Z}}_{i,t}^{\mathrm{anc}}
    =A_{\phi}(\mathbf{P}_{i,t}).
\end{equation}
Here, $\operatorname{sg}$ denotes stop-gradient, and the future window is used only during representation learning. Restricting the projector to $\mathbf{P}_{i,t}$ makes its output $\mathcal{G}$-measurable. For a fixed target encoder, the corresponding unrestricted squared-loss optimum is the prior conditional mean $\mathbf{Z}_{i,t}^{\mathrm{anc}}=\mathbb{E}[\mathbf{Z}_{i,t}^{\mathrm{tgt}}\mid\mathcal{G}]$. The revision branch uses $\widehat{\mathbf{Z}}_{i,t}^{\mathrm{anc}}=\operatorname{sg}[A_{\bar{\phi}}(\mathbf{P}_{i,t})]$ as a stable estimate of this same anchor.

\paragraph{Historical context encoding and revision prediction.}
The \emph{historical context encoder} $E_{\theta}$ summarizes the preceding 126 trading days, $\mathbf{X}_{i,t}^{\mathrm{ctx}}\in\mathbb{R}^{126\times40}$. The context and target encoders share a Transformer architecture that accommodates both window lengths. The \emph{revision predictor} $Q_{\psi}$ combines the historical representation with the anchor:
\begin{equation}
\label{eq:revision_learning}
    \mathbf{H}_{i,t}
    =E_{\theta}\!\left(
    \mathcal{M}(\mathbf{X}_{i,t}^{\mathrm{ctx}})
    \right),
    \qquad
    \widehat{\boldsymbol{\Delta}}_{i,t}
    =Q_{\psi}\!\left(
    \mathbf{H}_{i,t},
    \widehat{\mathbf{Z}}_{i,t}^{\mathrm{anc}}
    \right).
\end{equation}
The operator $\mathcal{M}$ applies temporal and feature masking during training and is the identity at inference. Both windows contain 28 asset-level and 12 market-level variables. At inference, the prediction thus uses only information available in $\mathcal{H}$.

The observed training displacement is $\mathbf{D}_{i,t}^{\mathrm{trn}}=\operatorname{sg}[\mathbf{Z}_{i,t}^{\mathrm{tgt}}-\widehat{\mathbf{Z}}_{i,t}^{\mathrm{anc}}]$, the learned-anchor counterpart of $\mathbf{D}_{i,t}$ in the formulation. The identity $\mathbb{E}[\mathbf{D}_{i,t}\mid\mathcal{H}]=\boldsymbol{\Delta}_{i,t}$ following Equation~\ref{eq:formulation_displacement} motivates using this noisy displacement as a regression target: the desired prediction is its context-supported component. With an estimated anchor, the training displacement can also contain anchor estimation error.

\paragraph{Separate anchor and revision objectives.}
Proposition~\ref{prop:prior_relative_projection} assigns distinct prediction roles to the two branches: estimating what the prior
supports and estimating the remaining predictable displacement. The corresponding anchor and revision objectives are:
\begin{equation}
\label{eq:representation_objectives}
\begin{aligned}
\mathcal{L}_{\mathrm{anc}}
&=\mathbb{E}\!\left[\left\|
\widetilde{\mathbf{Z}}_{i,t}^{\mathrm{anc}}
-\mathbf{Z}_{i,t}^{\mathrm{tgt}}
\right\|_2^2\right],
&
\mathcal{L}_{\mathrm{rev}}
&=\mathbb{E}\!\left[\left\|
\widehat{\boldsymbol{\Delta}}_{i,t}
-\mathbf{D}_{i,t}^{\mathrm{trn}}
\right\|_2^2\right].
\end{aligned}
\end{equation}
The anchor loss updates only $A_{\phi}$, while the revision loss updates only $E_{\theta}$ and $Q_{\psi}$. The anchor is detached in both the predictor input and the displacement target, preventing revision gradients from directly updating the prior projector.
A loss on the sum of the anchor and revision forecasts alone would leave the allocation between the two branches unspecified.

For stable training targets, $T_{\bar{\theta}}$ and $A_{\bar{\phi}}$ are updated by exponential moving averages (EMA) of $E_{\theta}$ and $A_{\phi}$, respectively, and receive no gradients. The implemented revision loss compares normalized forecasts with standardized and clipped displacement targets; forecasts are then converted back to the original latent coordinates for downstream use. The proposition characterizes ideal targets for a fixed encoder and unrestricted predictors; the learned modules provide finite dimensional approximations as the target space evolves. Architecture and optimization details are given in Appendix~\ref{app:implementation}.

\subsection{Return Readout with Frozen Representations}
\label{sec:downstream}

The downstream readout maps the prior-anchored representations to return forecasts. The historical summary $\mathbf{s}_{i,t}$ and financial prior $\mathbf{P}_{i,t}$ provide prior features. The anchor expresses the prior in the target latent coordinates, while the predicted revision $\widehat{\boldsymbol{\Delta}}_{i,t}$ summarizes the learned context-dependent adjustment relative to that anchor. A single MLP combines the four descriptors:
\begin{equation}
\label{eq:downstream_readout}
\mathbf{u}_{i,t}
=
\left[
\mathbf{s}_{i,t};
\mathbf{P}_{i,t};
\widehat{\mathbf{Z}}_{i,t}^{\mathrm{anc}};
\widehat{\boldsymbol{\Delta}}_{i,t}
\right],
\qquad
\widehat{y}_{i,t}
=
f_{\omega}(\mathbf{u}_{i,t}).
\end{equation}
Here, $\mathbf{s}_{i,t}\in\mathbb{R}^{40}$ is the coordinate-wise mean of the standardized input features over the 126-day context window. Providing both the prior and its anchor exposes the financial descriptors and their learned latent representation to the readout. The non-linear MLP can use these prior features to condition how the predicted revision contributes to the return forecast.

After selecting the representation checkpoint, we freeze representation modules and train only the MLP on 21-day log-return labels. This separates representation learning from return-supervised readout training and evaluates the utility of fixed features. At inference, the readout requires only historical observations and precomputed prior statistics available at the forecast origin. Its forecasts provide scores for cross-sectional ranking and portfolio management. Details are given in Appendix~\ref{app:downstream}.

\section{Experiment}
\label{sec:Experiment}

\subsection{Experimental Setup}
\label{sec:experimental_setup}
\paragraph{Datasets and experimental setup.}
\label{sec:training_configuration}
We conduct experiments on the \textsc{CN\_ALL} and \textsc{US\_ALL} datasets collected from Wind and Norgate Data, respectively, covering 5,669 and 10,152 historically tradable securities in the China and U.S. stock markets. We use 2014–2015 for initial prior fitting, 2016–2021 for representation and readout training, 2022–2023 for validation, and January 2024–June 2026 for testing. We exclude cross-boundary target windows and use training-only normalization statistics. Experiments run on a server with eight NVIDIA RTX 5090 GPUs. Following the expanding-window estimation principle of \citet{gu2020empirical}, financial priors use historical outcomes and remain fixed within each prediction block. After representation learning, we freeze the prior and representation modules and train only the MLP return readout, selecting checkpoints on validation data. Dataset details and implementation settings are provided in Appendices~\ref{app:datasets} and~\ref{app:implementation}.

\subsection{Cross-Sectional Ranking and Portfolio Evaluation}
\label{sec:portfolio_evaluation}

\paragraph{Baselines and evaluation.}
We compare \textsc{Stock-JEPA} with 13 baselines from 4 model families. Traditional financial models include FMB-style \citep{fama1973risk} and IPCA \citep{kelly2019characteristics}. General time-series models include LSTM \citep{hochreiter1997long}, Transformer \citep{vaswani2017attention}, iTransformer \citep{liu2024itransformer}, and TimeMixer \citep{wang2024timemixer}. Finance-specific models include FactorVAE \citep{duan2022factorvae}, StockMixer \citep{fan2024stockmixer}, MASTER \citep{li2024master}, and PRISM-VQ \citep{kim2026prismvq}. JEPA baselines include T-JEPA \citep{thimonier2025tjepa}, TS-JEPA \citep{ennadir2024joint}, and Fin-JEPA \citep{wang2026finjepa}. All baselines are retrained using predefined hyperparameter
configurations, with common validation and test periods and checkpoints selected by validation loss. Evaluation uses a common 21-day horizon and reports Pearson IC, Spearman RankIC, and portfolio performance. We form equal-weighted top-30 long-only batches in China \citep{li2024master} and top-30/bottom-30 long--short batches in the U.S. Methods share market-specific rules, charging 5 bps \citep{conlon2025factorcovariance} per unit of executed notional at entry and exit. Portfolio metrics derive from daily closing NAV. See Appendices~\ref{app:implementation_details}
and~\ref{app:portfolio_details}.

\begin{table*}[ht]
\centering
\caption{
Out-of-sample ranking and portfolio performance
(mean$\pm$SD over three runs). Bold and underlined entries denote the best and second-best means.
}
\label{tab:portfolio_results}

{\scriptsize
\renewcommand{\arraystretch}{1.05}
\setlength{\tabcolsep}{3.2pt}

\begin{tabular*}{\textwidth}{@{\extracolsep{\fill}}clrrrrr}
\toprule
Category
& Model
& IC (\%)$\uparrow$
& RankIC (\%)$\uparrow$
& Net AR (\%)$\uparrow$
& Net Sharpe$\uparrow$
& MDD (\%)$\uparrow$ \\
\midrule

\multicolumn{7}{l}{
\textbf{Panel A: CN\_ALL -- Top-30 long-only}
} \\
\addlinespace[2pt]

\multirow{2}{*}{
    \shortstack[c]{\itshape Traditional\\\itshape financial models}
}
& FMB-style
& $5.09\pm0.00$
& $8.76\pm0.00$
& $9.51\pm0.00$
& $0.67\pm0.00$
& $\second{-16.36\pm0.00}$ \\

& IPCA
& $7.19\pm0.01$
& $8.69\pm0.02$
& $20.99\pm0.00$
& $0.78\pm0.00$
& $-27.02\pm0.00$ \\

\cmidrule(lr){2-7}

\multirow{4}{*}{
    \shortstack[c]{\itshape General\\\itshape Deep learning}
}
& LSTM
& $5.18\pm2.90$
& $8.24\pm2.49$
& $27.84\pm9.83$
& $0.95\pm0.11$
& $-27.38\pm8.96$ \\

& Transformer
& $5.82\pm2.48$
& $7.39\pm2.84$
& $40.95\pm15.35$
& $1.14\pm0.34$
& $-31.30\pm4.01$ \\

& iTransformer
& $5.39\pm1.06$
& $6.84\pm1.52$
& $24.19\pm6.17$
& $0.86\pm0.13$
& $-29.26\pm1.59$ \\

& TimeMixer
& $4.82\pm0.81$
& $6.45\pm1.30$
& $29.95\pm19.64$
& $0.92\pm0.31$
& $-27.50\pm1.52$ \\

\cmidrule(lr){2-7}

\multirow{4}{*}{
    \shortstack[c]{\itshape Finance-based\\\itshape deep models}
}
& FactorVAE
& $4.26\pm1.29$
& $6.20\pm1.73$
& $24.68\pm3.05$
& $0.87\pm0.14$
& $-24.69\pm4.97$ \\

& StockMixer
& $2.93\pm2.40$
& $5.90\pm3.03$
& $23.37\pm19.48$
& $0.89\pm0.77$
& $-21.96\pm6.59$ \\

& MASTER
& $\second{7.34\pm2.01}$
& $\second{9.84\pm1.08}$
& $\second{41.81\pm4.00}$
& $\second{1.23\pm0.25}$
& $-21.46\pm2.79$ \\

& PRISM-VQ
& $2.10\pm0.73$
& $5.19\pm0.44$
& $8.19\pm3.17$
& $0.70\pm0.19$
& $\best{-13.64\pm5.75}$ \\

\cmidrule(lr){2-7}

\multirow{3}{*}{
    \shortstack[c]{\itshape JEPA\\\itshape baselines}
}
& T-JEPA
& $0.89\pm1.10$
& $1.92\pm1.98$
& $18.03\pm13.72$
& $0.35\pm0.23$
& $-32.08\pm4.13$ \\

& TS-JEPA
& $2.08\pm2.34$
& $4.09\pm1.49$
& $2.75\pm6.10$
& $0.21\pm0.40$
& $-16.94\pm3.61$ \\

& Fin-JEPA
& $1.15\pm1.83$
& $1.61\pm1.74$
& $19.03\pm4.86$
& $0.66\pm0.07$
& $-25.38\pm4.69$ \\

\cmidrule(lr){2-7}

\textit{Ours}
& \textbf{Stock-JEPA}
& $\best{10.70\pm0.61}$
& $\best{11.22\pm0.24}$
& $\best{45.66\pm5.12}$
& $\best{1.50\pm0.28}$
& $-31.09\pm2.30$ \\

\midrule

\multicolumn{7}{l}{
\textbf{Panel B: US\_ALL -- Top-30 minus bottom-30 long--short}
} \\
\addlinespace[2pt]

\multirow{2}{*}{
    \shortstack[c]{\itshape Traditional\\\itshape financial models}
}
& FMB-style
& $9.44\pm0.00$
& $7.61\pm0.00$
& $49.04\pm0.00$
& $\second{1.11\pm0.00}$
& $-57.98\pm0.00$ \\

& IPCA
& $9.64\pm0.00$
& $7.95\pm0.00$
& $41.53\pm0.01$
& $0.76\pm0.00$
& $-68.44\pm0.00$ \\

\cmidrule(lr){2-7}

\multirow{4}{*}{
    \shortstack[c]{\itshape General\\\itshape Deep learning}
}
& LSTM
& $11.30\pm0.57$
& $10.59\pm1.41$
& $40.52\pm19.18$
& $0.60\pm0.25$
& $-69.19\pm3.07$ \\

& Transformer
& $8.82\pm2.94$
& $8.92\pm1.85$
& $10.38\pm29.16$
& $0.20\pm0.64$
& $-66.02\pm6.33$ \\

& iTransformer
& $4.37\pm3.81$
& $3.43\pm2.82$
& $-3.26\pm44.22$
& $-0.09\pm0.99$
& $-72.23\pm19.36$ \\

& TimeMixer
& $10.16\pm1.91$
& $9.88\pm1.57$
& $45.67\pm13.83$
& $0.69\pm0.35$
& $-70.41\pm16.77$ \\

\cmidrule(lr){2-7}

\multirow{4}{*}{
    \shortstack[c]{\itshape Finance-based\\\itshape deep models}
}
& FactorVAE
& $\best{12.85\pm0.59}$
& $\second{11.18\pm2.28}$
& $\second{60.66\pm19.79}$
& $1.03\pm0.66$
& $-66.13\pm16.26$ \\

& StockMixer
& $7.46\pm3.33$
& $7.26\pm2.60$
& $32.70\pm25.70$
& $0.43\pm0.68$
& $-72.38\pm17.37$ \\

& MASTER
& $9.76\pm2.14$
& $9.08\pm3.62$
& $45.05\pm30.05$
& $1.00\pm0.57$
& $-43.31\pm9.01$ \\

& PRISM-VQ
& $9.01\pm3.06$
& $8.26\pm3.82$
& $46.24\pm65.03$
& $0.71\pm1.03$
& $-65.14\pm26.41$ \\

\cmidrule(lr){2-7}

\multirow{3}{*}{
    \shortstack[c]{\itshape JEPA\\\itshape baselines}
}
& T-JEPA
& $-4.57\pm4.90$
& $-3.77\pm3.26$
& $1.49\pm2.53$
& $0.08\pm0.14$
& $\second{-42.86\pm18.46}$ \\

& TS-JEPA
& $-2.37\pm7.37$
& $-0.50\pm4.71$
& $-11.01\pm9.62$
& $-0.30\pm0.27$
& $-65.37\pm9.04$ \\

& Fin-JEPA
& $1.01\pm1.98$
& $0.69\pm1.86$
& $-10.50\pm24.90$
& $-0.37\pm0.79$
& $-50.08\pm16.01$ \\

\cmidrule(lr){2-7}

\textit{Ours}
& \textbf{Stock-JEPA}
& $\second{11.67\pm0.79}$
& $\best{12.96\pm0.48}$
& $\best{68.20\pm15.92}$
& $\best{1.24\pm0.20}$
& $\best{-40.12\pm13.12}$ \\

\bottomrule
\end{tabular*}
}
\vspace{-16pt}
\end{table*}

\paragraph{Results.}
Table~\ref{tab:portfolio_results} shows that \textsc{Stock-JEPA} ranks first in four of five metrics in each market.
RankIC reaches 11.22\% on \textsc{CN\_ALL} and 12.96\% on \textsc{US\_ALL}, exceeding MASTER and FactorVAE by 14.0\%
and 15.9\%, respectively. The strongest JEPA baselines achieve RankIC of 4.09\% in China and 0.69\% in the U.S., both well below \textsc{Stock-JEPA}. Mean net annualized returns and net Sharpe ratios are also highest in both markets: 45.66\% and 1.50 in China, and 68.20\% and 1.24 in the U.S. The broad U.S. universe includes small-cap and highly volatile stocks. Selecting 30 stocks on each side can expose individual batches to sharp price moves, particularly losses from upward jumps in short positions. Under this common U.S. portfolio protocol, \textsc{Stock-JEPA} records the least severe mean maximum drawdown ($-40.12\%$), complementing its leading return and Sharpe. Traditional financial models remain competitive: FMB-style and IPCA attain CN\_ALL RankIC of 8.76\% and 8.69\%, exceeding all general deep-learning baselines and supporting our motivation to combine financial structure with deep learning.

\subsection{Understanding the Prior-Anchored Representation}
\label{sec:Understanding representation}
Our framework aims to turn historical information into useful revisions to a financial prior. We examine whether it achieves this through three questions:
(i) do the framework's components and training choices improve downstream performance?
(ii) do revisions distinguish stocks and adjust as market conditions change?
and (iii) can known predictive signals be recovered from the anchor and revision when learned from noisy data?

\paragraph{Predictive utility through ablation studies.}
\label{sec:component_ablation}
\textit{Component ablation.}
With a linear readout, S1+S2 exceeds all standalone
configurations in mean RankIC and net Sharpe, supporting
the predictive value of prior-anchored features.
Adding the historical summary and an MLP improves both
further (Table~\ref{tab:component_ablation};
Appendix~\ref{app:component_ablation}).
\textit{Matched readout ablation.}
With frozen representations and matched MLPs, adding
revisions to the historical summary, prior, and anchor
raises mean RankIC by 2.02/3.67 percentage points on
CN\_ALL/US\_ALL.
Revisions also outperform equally sized context features
on all four mean metrics in both markets, suggesting that
prior-relative prediction makes history more useful to
the readout (Appendix~\ref{app:matched_mlp_readout}).
\textit{Controlled representation ablation.}
Under matched experimental settings, Separate-online improves
mean net AR and Sharpe over Joint-online in both markets.
EMA anchoring improves both metrics further, yielding the
highest means on both portfolio measures in each market.
Separate-EMA also outperforms Ordinary-JEPA on all four
mean metrics in both markets, with the same historical
summary and prior inputs supplied to every readout.
These gains support the value of prior-anchored representation
learning (Table~\ref{tab:component_ablation};
Appendix~\ref{app:controlled_representation_ablation}).
\vspace{-0.5\baselineskip}

\begin{wraptable}{r}{0.7\textwidth}
    \centering
    \vspace{-0.4cm}
    \caption{Selected pipeline and controlled representation ablations.}
    \vspace{-0.2cm}
    \label{tab:component_ablation}
    \scriptsize
    \setlength{\tabcolsep}{3.2pt}
    \renewcommand{\arraystretch}{1.05}
    \begin{tabular}{lcccc}
        \toprule
        & \multicolumn{2}{c}{CN\_ALL}
        & \multicolumn{2}{c}{US\_ALL} \\
        \cmidrule(lr){2-3}\cmidrule(lr){4-5}
        Configuration
        & RankIC (\%) $\uparrow$ & Net Sharpe $\uparrow$
        & RankIC (\%) $\uparrow$ & Net Sharpe $\uparrow$ \\
        \midrule
        \multicolumn{5}{l}{\textit{Pipeline configurations}} \\
        S2-only
        & $6.21\pm3.17$ & $0.85\pm0.27$
        & $7.92\pm2.80$ & $0.81\pm0.10$ \\
        S1+S2
        & $9.94\pm1.23$ & $1.10\pm0.27$
        & $11.89\pm0.66$ & $1.00\pm0.19$ \\
        Full (S1+S2+S3)
        & $\mathbf{11.22\pm0.24}$ & $\mathbf{1.50\pm0.28}$
        & $\mathbf{12.96\pm0.48}$ & $\mathbf{1.24\pm0.20}$ \\
        \midrule
        \multicolumn{5}{l}{
            \textit{Controlled representation continuation}} \\
        Separate-EMA (ours)
        & $11.11\pm0.94$ & $\mathbf{1.51\pm0.26}$
        & $\mathbf{12.45\pm0.74}$ & $\mathbf{1.27\pm0.13}$ \\
        Separate-online
        & $11.13\pm0.68$ & $1.48\pm0.26$
        & $12.14\pm0.75$ & $1.13\pm0.10$ \\
        Joint-online
        & $\mathbf{11.49\pm0.60}$ & $1.25\pm0.12$
        & $12.37\pm1.20$ & $1.09\pm0.20$ \\
        Ordinary-JEPA
        & $9.17\pm0.59$ & $1.01\pm0.23$
        & $9.05\pm0.90$ & $0.50\pm0.09$ \\
        \bottomrule
    \end{tabular}
    \vspace{-5pt}
\end{wraptable}

\paragraph{Representation structure and market responses.}
\label{sec:representation_structure}
After removing each day's cross-sectional mean, the 128-dimensional revisions have a mean off-diagonal absolute correlation of approximately 0.30 and an average daily effective rank of 21.1 (Figure~\ref{fig:revision_structure}). These statistics indicate that differences among stocks span multiple latent directions. See Appendix~\ref{app:revision_structure} for details. The market-average revision direction also changes across dates, while large daily adjustments occur across stocks around monetary-policy repricing and trade tensions (Figure~\ref{fig:revision_response_recovery}, left; Appendix~\ref{app:revision_response}).
The two views separate differences among stocks on the same date from changes in the market-average representation across dates. The event statistic uses the median stock-level change, so its peaks reflect broad adjustments with limited influence from a few extreme observations. Together, these patterns show that revisions supply stock-level distinctions for ranking alongside broad temporal adjustments. This fits their intended role: adjusting the financial prior to reflect both the stock and its changing market context.
\begin{figure}[H]
   \vspace{-12pt}
    \centering
    \includegraphics[width=0.75\linewidth]{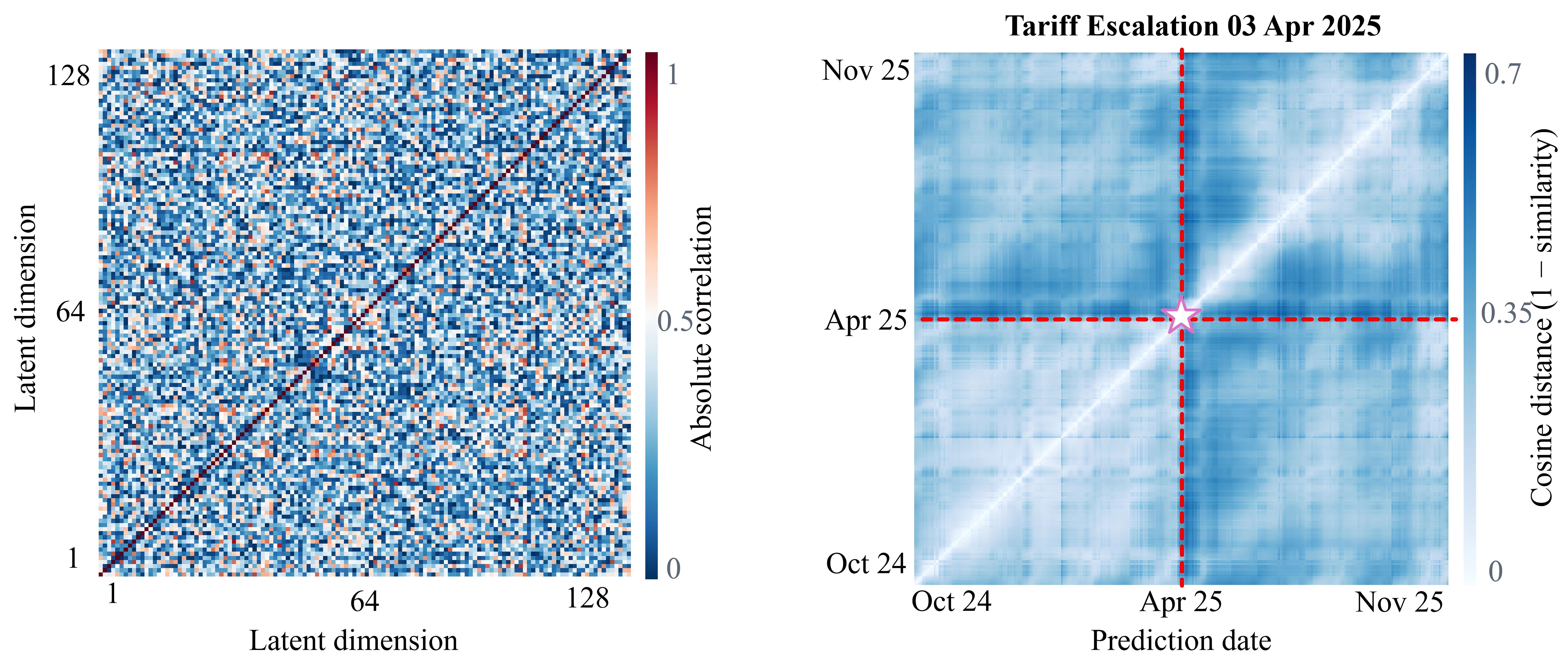}
    \caption{
    \textbf{Cross-sectional and temporal revision structure on US\_ALL.}
    Left: Dimension-wise absolute correlations computed from equally averaged daily cross-sectional covariance matrices. Right: Cross-date cosine distances between market-average revisions after training-set standardization, averaged over three seeds. The star marks April 3, 2025.
    }
    \label{fig:revision_structure}
    \label{fig:representation_diagnostics}
    \vspace{-16pt}
\end{figure}

\paragraph{Predictable signal recovery in simulation.}
\label{sec:simulation_recovery}
In simulation, the anchor and revision retain complementary slow and fast return signals without component-level supervision during representation learning. We use known component labels only to fit and evaluate Ridge probes on frozen features (Figure~\ref{fig:revision_response_recovery}, right). For Separate-EMA, the anchor achieves a mean test $R^2$ of 66.14\% on the slow signal, while the revision achieves 49.25\% on the fast
signal. Their concatenation retains both signals (62.22\% slow and 55.24\% fast). Ordinary-JEPA and direct supervision also recover the fast signal, but neither achieves positive mean slow-signal recovery. For independent future noise, all configurations score below the constant reference on average (Appendix~\ref{app:simulation_recovery}).

Although the slow state is more persistent, its return contribution depends on both the state and asset-specific exposures; persistence alone does not ensure linear recovery from learned features. A related finding in asset pricing is that incorporating expected-return information helps recover economically relevant factors missed by covariance-based extraction \citep{lettau2020factors}. Our results suggest that the calibrated prior helps preserve persistent return structure, while revision learning retains complementary faster signals. This role for financial structure is consistent with the forecasting benefits of economically motivated restrictions
\citep{campbell2008predicting}. Together, these findings support the idea behind Equation~\ref{eq:formulation_displacement}: the anchor predicts from the prior, while the revision uses historical context to learn predictable corrections from noisy future targets.
\begin{figure}[!htbp]
    \vspace{-6pt}
    \centering
    \includegraphics[width=0.85\linewidth]{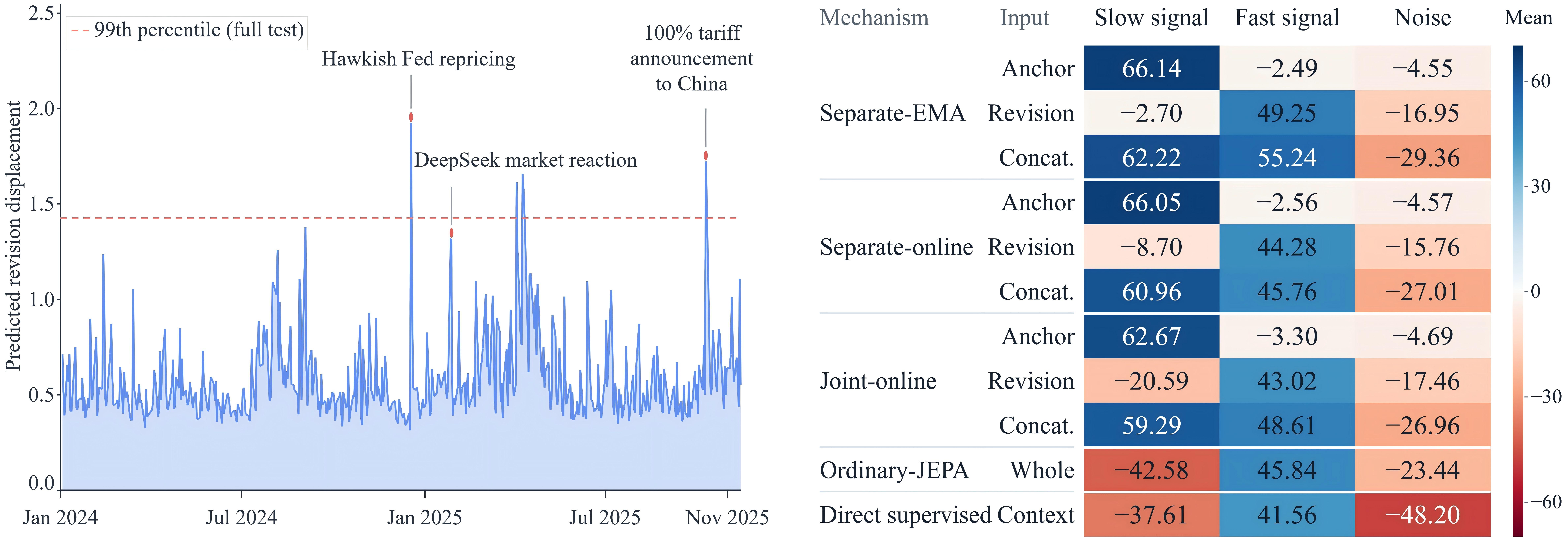}
    \caption{ \textbf{Revision responses and recovery of known signals.} Left: Cross-sectional median of standardized daily revision changes on US\_ALL, averaged over three runs. Right: Frozen Ridge-probe test $R^2$ (\%) for slow and fast return signals and independent future noise.} \label{fig:revision_response_recovery}
    \vspace{-16pt}
\end{figure}

\section{Conclusion}
\label{sec:conclusion}
We introduced \textsc{Stock-JEPA}, which learns prior-relative revisions through separate anchor and revision objectives. Population analysis characterizes the risk reduction from historical context. Frozen representations lead evaluated methods in mean RankIC and net Sharpe in both markets. Ablations support incremental predictive value; diagnostics show stock-level and temporal variation, and simulation demonstrates complementary signal recovery from noisy targets.

\section*{AI Use Statement}
We used generative AI tools to improve the grammar, wording,
and readability of the manuscript. The authors reviewed and
revised the AI-assisted text and take full responsibility for
the accuracy and integrity of the final manuscript.

\bibliography{iclr2027_conference}
\bibliographystyle{iclr2027_conference}

\appendix

\section{Population Characterization}
\label{app:population_proof}

\subsection{Proof of Proposition~\ref{prop:prior_relative_projection}}

\begin{proof}
Equip
$L^2(\Omega,\mathcal{F},\mathbb{P};\mathbb{R}^{d_z})$
with the inner product
\begin{equation}
\label{eq:l2_inner_product}
    \langle \mathbf{U},\mathbf{V}\rangle_{L^2}
    :=
    \mathbb{E}[\mathbf{U}^{\top}\mathbf{V}].
\end{equation}
The spaces $L^2(\mathcal{G})$ and $L^2(\mathcal{H})$ are closed linear subspaces of this Hilbert space. Conditional expectation with respect to either sigma-algebra is the corresponding orthogonal projection.

For the first objective, the projection of
$\mathbf{Z}^{\mathrm{tgt}}$ onto $L^2(\mathcal{G})$ is
\begin{equation}
\label{eq:proof_anchor_projection}
    \mathbf{a}^{\star}
    =
    \mathbb{E}[\mathbf{Z}^{\mathrm{tgt}}\mid\mathcal{G}]
    =
    \mathbf{Z}^{\mathrm{anc}}.
\end{equation}
For every $\mathbf{a}\in L^2(\mathcal{G})$, the Pythagorean identity gives
\begin{equation}
\label{eq:proof_anchor_pythagorean}
\begin{aligned}
    \mathbb{E}\!\left[
        \|\mathbf{Z}^{\mathrm{tgt}}-\mathbf{a}\|_2^2
    \right]
    ={}&
    \mathbb{E}\!\left[
        \|\mathbf{Z}^{\mathrm{tgt}}-\mathbf{a}^{\star}\|_2^2
    \right]
    \\
    &+
    \mathbb{E}\!\left[
        \|\mathbf{a}^{\star}-\mathbf{a}\|_2^2
    \right].
\end{aligned}
\end{equation}
The second term is nonnegative and vanishes if and only if
$\mathbf{a}=\mathbf{a}^{\star}$ almost surely. Hence the first minimizer is unique up to almost-sure equality.

Because $\mathcal{G}\subseteq\mathcal{H}$,
$\mathbf{a}^{\star}$ is also $\mathcal{H}$-measurable.
The second objective projects
$\mathbf{Z}^{\mathrm{tgt}}-\mathbf{a}^{\star}$
onto $L^2(\mathcal{H})$. Its almost-surely unique minimizer is
\begin{equation}
\label{eq:proof_revision_projection}
\begin{aligned}
    \mathbf{r}^{\star}
    &=
    \mathbb{E}[
        \mathbf{Z}^{\mathrm{tgt}}-\mathbf{a}^{\star}
        \mid\mathcal{H}
    ]
    \\
    &=
    \mathbb{E}[\mathbf{Z}^{\mathrm{tgt}}\mid\mathcal{H}]
    -
    \mathbb{E}[\mathbf{Z}^{\mathrm{tgt}}\mid\mathcal{G}]
    =
    \boldsymbol{\Delta}.
\end{aligned}
\end{equation}

The tower property further implies
\begin{equation}
\label{eq:proof_revision_conditional_mean}
\begin{aligned}
    \mathbb{E}[\boldsymbol{\Delta}\mid\mathcal{G}]
    &=
    \mathbb{E}\!\left[
        \mathbb{E}[\mathbf{Z}^{\mathrm{tgt}}\mid\mathcal{H}]
        \mid\mathcal{G}
    \right]
    -
    \mathbb{E}[\mathbf{Z}^{\mathrm{tgt}}\mid\mathcal{G}]
    \\
    &=\mathbf{0}.
\end{aligned}
\end{equation}
Since
$\mathbf{D}=\mathbf{Z}^{\mathrm{tgt}}-\mathbf{Z}^{\mathrm{anc}}$,
we also obtain
\begin{equation}
\label{eq:proof_displacement_predictor}
    \mathbb{E}[\mathbf{D}\mid\mathcal{H}]
    =
    \mathbb{E}[\mathbf{Z}^{\mathrm{tgt}}\mid\mathcal{H}]
    -
    \mathbf{Z}^{\mathrm{anc}}
    =
    \boldsymbol{\Delta}.
\end{equation}

\paragraph{Projection-risk decomposition.}
The corresponding optimal risks are
\begin{equation}
\label{eq:optimal_projection_risks}
\begin{aligned}
    \mathcal{R}_{\mathcal{G}}
    &:=
    \mathbb{E}\!\left[
        \|\mathbf{Z}^{\mathrm{tgt}}
        -\mathbf{Z}^{\mathrm{anc}}\|_2^2
    \right],
    \\
    \mathcal{R}_{\mathcal{H}}
    &:=
    \mathbb{E}\!\left[
        \|\mathbf{Z}^{\mathrm{tgt}}
        -\mathbf{Z}^{\mathrm{anc}}
        -\boldsymbol{\Delta}\|_2^2
    \right].
\end{aligned}
\end{equation}
Since
$\mathbf{Z}^{\mathrm{anc}}+\boldsymbol{\Delta}
=\mathbb{E}[\mathbf{Z}^{\mathrm{tgt}}\mid\mathcal{H}]$,
these are the minimum squared prediction risks under
$\mathcal{G}$ and $\mathcal{H}$, respectively.

By Equation~\ref{eq:formulation_displacement},
\begin{equation}
\label{eq:proof_displacement_decomposition}
    \mathbf{Z}^{\mathrm{tgt}}-\mathbf{Z}^{\mathrm{anc}}
    =
    \boldsymbol{\Delta}
    +
    \boldsymbol{\varepsilon}^{\mathrm{lat}}.
\end{equation}
Because $\boldsymbol{\Delta}\in L^2(\mathcal{H})$ and
$\mathbb{E}[\boldsymbol{\varepsilon}^{\mathrm{lat}}
\mid\mathcal{H}]=\mathbf{0}$,
the revision and innovation are orthogonal:
\begin{equation}
\label{eq:proof_revision_innovation_orthogonality}
    \mathbb{E}\!\left[
        \boldsymbol{\varepsilon}^{\mathrm{lat}\top}
        \boldsymbol{\Delta}
    \right]
    =
    \mathbb{E}\!\left[
        \mathbb{E}[
            \boldsymbol{\varepsilon}^{\mathrm{lat}}
            \mid\mathcal{H}
        ]^{\top}
        \boldsymbol{\Delta}
    \right]
    =
    0.
\end{equation}
Expanding the squared norm therefore gives
\[
    \mathcal{R}_{\mathcal{G}}
    =
    \mathbb{E}\!\left[\|\boldsymbol{\Delta}\|_2^2\right]
    +
    \mathbb{E}\!\left[
        \|\boldsymbol{\varepsilon}^{\mathrm{lat}}\|_2^2
    \right]
    =
    \mathbb{E}\!\left[\|\boldsymbol{\Delta}\|_2^2\right]
    +
    \mathcal{R}_{\mathcal{H}}.
\]
Rearranging yields Equation~\ref{eq:projection_risk_decomposition}
and establishes
$\mathcal{R}_{\mathcal{H}}\leq\mathcal{R}_{\mathcal{G}}$.
Since $\|\boldsymbol{\Delta}\|_2^2$ is non-negative,
equality holds if and only if
$\boldsymbol{\Delta}=\mathbf{0}$ almost surely.
Otherwise, the risk decreases strictly by
$\mathbb{E}[\|\boldsymbol{\Delta}\|_2^2]$.
Thus, additional historical information preserves the optimal prior-based prediction when it provides no conditional-mean predictive value, and improves it when it reveals predictable variation beyond the prior.
\end{proof}

\paragraph{Scope of the result.}
The projection identity assumes a fixed target encoder and unrestricted square-integrable predictors. A related feasibility argument applies to parameterized revision classes: with the target and anchor held fixed, any class containing the zero function retains the anchor-only predictor as a feasible solution, so the infimum squared prediction loss cannot exceed the anchor-only loss. The exact risk reduction in Equation~\ref{eq:projection_risk_decomposition} additionally requires attaining the conditional projections characterized above. These statements concern optimal objective values; finite samples, optimization error, and target-encoder updates affect their practical realization. Lower representation-prediction risk also does not by itself guarantee downstream economic gains. We therefore evaluate the learned revisions on large-scale China and U.S. equity datasets to assess their practical predictive and portfolio utility.

\section{Structured Financial Prior}
\label{app:financial_state}

\paragraph{Point-in-time inputs and financial targets.}
At each forecast origin $t$, \textsc{Stock-JEPA} constructs its financial
prior from a 40-dimensional descriptor vector
$\mathbf{x}_{i,t}\in\mathbb{R}^{40}$. The vector contains 28 asset-level
descriptors covering returns, momentum, volatility, price ranges, trading
activity, and liquidity, together with 12 market-level descriptors covering
market returns, volatility, dispersion, benchmark dynamics, and tradability.
All historical statistics use observations available through $t$.

For each horizon $h\in\{5,21\}$, let
$\ell_{i,t+k}=\log C_{i,t+k}-\log C_{i,t+k-1}$
denote the daily log return. For each horizon, the four financial targets are
\begin{equation}
\label{eq:app_financial_targets}
\begin{aligned}
R_{i,t}^{(h)}
&= \sum_{k=1}^{h}\ell_{i,t+k},
\\
V_{i,t}^{(h)}
&= \sqrt{
    \frac{1}{h-1}
    \sum_{k=1}^{h}
    \left(
        \ell_{i,t+k}-\frac{R_{i,t}^{(h)}}{h}
    \right)^2
},
\\
M_{i,t}^{(h)}
&= \min_{0\leq k\leq h}
    \left\{
        \log C_{i,t+k}
        -\max_{0\leq j\leq k}\log C_{i,t+j}
    \right\},
\\
E_{i,t}^{(h)}
&= R_{i,t}^{(h)}
   -\sum_{k=1}^{h}\ell^{\mathrm{EW}}_{t+k}.
\end{aligned}
\end{equation}
Here, $\ell^{\mathrm{EW}}_{t}$ denotes the panel's daily equal-weighted
market log return. Volatility is unannualized, and drawdown is represented as
a non-positive log-price decline. Concatenating the four targets at the two
horizons gives
\begin{equation}
\begin{aligned}
\mathbf{y}_{i,t}
&=
\left[
R_{i,t}^{(5)},V_{i,t}^{(5)},M_{i,t}^{(5)},E_{i,t}^{(5)},
\right.
\\[-0.2em]
&\hspace{2.9em}\left.
R_{i,t}^{(21)},V_{i,t}^{(21)},M_{i,t}^{(21)},E_{i,t}^{(21)}
\right]^{\top}
\in\mathbb{R}^{8}.
\end{aligned}
\end{equation}
Only samples with complete and finite target windows contained within a
contiguous asset segment are retained.

\paragraph{Blockwise prior estimation.}
For prediction block $b$ beginning at $\tau_b$, the historical
fitting set $\mathcal I_b$ includes only samples whose complete
21-day outcomes end before $\tau_b$.
Input features are standardized using block-specific historical
moments, with non-finite standardized entries replaced by zero
and values clipped to $[-8,8]$.
Using the resulting features $\widetilde{\mathbf{x}}^{(b)}_{i,t}$,
we fit a multi-output ridge model:
\begin{equation}
\label{eq:app_ridge_objective}
\begin{aligned}
(\widehat{\mathbf{W}}_b,\widehat{\mathbf{c}}_b)
=\arg\min_{\mathbf{W},\mathbf{c}}\;
&\sum_{(i,t)\in\mathcal{I}_b}
\left\|
\mathbf{y}_{i,t}-\mathbf{c}
-\mathbf{W}^{\top}\widetilde{\mathbf{x}}^{(b)}_{i,t}
\right\|_2^2
\\[-0.2em]
&+\alpha\|\mathbf{W}\|_F^2.
\end{aligned}
\end{equation}
Here, $\alpha=20$, and the intercept is not penalized. For a sample assigned
to block $b$, the resulting raw prior prediction is
$\widehat{\mathbf{y}}^{\mathrm{raw}}_{i,t}
=\widehat{\mathbf{c}}_b+\widehat{\mathbf{W}}_b^{\top}
\widetilde{\mathbf{x}}^{(b)}_{i,t}$.
Fitting pools exceeding three million observations are deterministically
thinned by sample ID.

The temporal fitting schedule is detailed in
Appendix~\ref{app:temporal_protocol} and illustrated
in Figure~\ref{fig:temporal_protocol}.

\paragraph{Residual scales and normalization.}
For coordinate $j$, define the fitting residual
$e_{i,t,j}=y_{i,t,j}-\widehat{y}^{\mathrm{raw}}_{i,t,j}$.
Its blockwise mean and scale are
\begin{equation}
\label{eq:app_residual_scale}
\begin{aligned}
\bar{e}_{b,j}
&=
\frac{1}{|\mathcal{I}_b|}
\sum_{(i,t)\in\mathcal{I}_b}e_{i,t,j},
\\
s^{\mathrm{res}}_{b,j}
&=
\sqrt{
\max\left\{
\frac{1}{|\mathcal{I}_b|}
\sum_{(i,t)\in\mathcal{I}_b}
(e_{i,t,j}-\bar{e}_{b,j})^2,
10^{-8}
\right\}
}.
\end{aligned}
\end{equation}
Let $m_j^{\mathrm{tr}}$ and $s_j^{\mathrm{tr}}$ be target means
and standard deviations estimated over the full training split,
and set $d_j=\max(s_j^{\mathrm{tr}},10^{-6})$.
The stored prior statistics are
\begin{equation}
\label{eq:app_prior_normalization}
\begin{aligned}
\widehat{y}^{\mathrm{pri}}_{i,t,j}
&=
\operatorname{clip}_{[-8,8]}
\left(
\frac{\widehat{y}^{\mathrm{raw}}_{i,t,j}-m_j^{\mathrm{tr}}}{d_j}
\right),
\\
\sigma^{\mathrm{res}}_{b,j}
&=
\log\left(
\max\left\{
\frac{s^{\mathrm{res}}_{b,j}}{d_j},\,10^{-4}
\right\}
\right).
\end{aligned}
\end{equation}
The log residual-scale features are shared within each block.
Ridge fitting, input scaling, and raw residual-scale estimation
use block-specific histories; output normalization uses common
full-training moments fixed across all blocks.

\paragraph{Representation-learning interface.}
Let $b(t)$ denote the prediction block containing forecast origin $t$.
The eight normalized predictions and eight log residual-scale
features form the structured prior in
Equation~\ref{eq:structured_prior}:
\begin{equation}
\label{eq:app_prior_interface}
\mathbf{P}_{i,t}
=
\left[
\widehat{\mathbf{y}}^{\mathrm{pri}}_{i,t};
\boldsymbol{\sigma}^{\mathrm{res}}_{b(t)}
\right]
\in\mathbb{R}^{16}.
\end{equation}
These statistics are precomputed, remain fixed during representation
learning, and are mapped by the prior projector into the target
latent space.

\section{Representation Architecture and Optimization}
\label{app:implementation}

\paragraph{Sequence encoders and masking.}
\textsc{Stock-JEPA} uses the same sequence-encoder architecture for the
126-day historical context and the 21-day future target window. Each
40-dimensional daily observation is linearly projected to a 128-dimensional
token. A learnable summary token and positional embeddings are then added,
followed by four pre-normalized Transformer layers. Each layer uses eight
self-attention heads, a 512-dimensional feed-forward network, GELU activation,
and dropout of 0.1. The final summary token is layer-normalized to produce a
128-dimensional representation.

During training, contiguous temporal blocks of 5--21 days are sampled until at
least 30\% of the context positions are masked, and their projected tokens are
replaced by a learnable mask token. Independently, each predefined feature
group is zeroed across the context window with probability 0.25. The six groups
cover returns and momentum, volatility, price ranges, trading activity and
liquidity, market-relative features, and market descriptors. The future target
window is not masked, and all context masking is disabled during validation and
inference.

\paragraph{Prior projector and revision predictor.}
The prior projector takes the 16-dimensional prediction--log-scale vector
defined in Appendix~\ref{app:financial_state}. It consists of a linear map from
16 to 128 dimensions, followed by LayerNorm, GELU, and a second
128-dimensional linear layer. The revision predictor concatenates the
128-dimensional context representation and the detached EMA anchor, then
applies a $256\rightarrow256\rightarrow128$ MLP with GELU and dropout of 0.1
between its linear layers. 

\paragraph{Revision-coordinate normalization.}
Revision regression uses coordinate-wise running statistics of the detached
training displacement $\mathbf{D}^{\mathrm{trn}}_{i,t}$. At update $k$, let
$\mathbf{m}_k$ and $\mathbf{v}_k$ denote the running mean and variance,
respectively, and define
\begin{equation}
\label{eq:app_revision_scale}
\mathbf{s}_k
=
\max\left\{
\sqrt{\mathbf{v}_k+10^{-4}},10^{-3}
\right\}.
\end{equation}
The maximum is applied coordinate-wise. During the first 100 updates, the
statistics are formed by averaging batch means and within-batch variances;
subsequent updates use an EMA decay of 0.99. The normalized regression target
is
\begin{equation}
\label{eq:app_normalized_revision_target}
\mathbf{d}^{\mathrm{norm}}_{i,t}
=
\operatorname{clip}_{[-8,8]}
\left(
\frac{\mathbf{D}^{\mathrm{trn}}_{i,t}-\mathbf{m}_k}
{\mathbf{s}_k}
\right),
\end{equation}
where clipping and division are coordinate-wise. Writing the predictor's
normalized output as $\widehat{\mathbf{q}}_{i,t}$, the implemented revision
loss and the revision supplied to the downstream readout are
\begin{equation}
\label{eq:implementation_revision_loss}
\begin{aligned}
\mathcal{L}_{\mathrm{rev}}^{\mathrm{impl}}
&=
\mathbb{E}\!\left[
\left\|
\widehat{\mathbf{q}}_{i,t}
-\mathbf{d}^{\mathrm{norm}}_{i,t}
\right\|_2^2
\right],
\\
\widehat{\boldsymbol{\Delta}}_{i,t}
&=
\mathbf{m}_k
+\mathbf{s}_k\odot\widehat{\mathbf{q}}_{i,t}.
\end{aligned}
\end{equation}
In the main text, $Q_{\psi}$ denotes the revision predictor including
the affine conversion back to the original latent coordinates in
Equation~\ref{eq:implementation_revision_loss}. The running statistics are updated only from training batches and remain fixed during validation and inference.

\paragraph{Optimization and checkpoint selection.}
Two AdamW optimizers separately update the online prior projector and the
context-encoder--revision-predictor pair. Both use a learning rate of
$10^{-4}$, weight decay of $0.01$, and gradient-norm clipping at 1.0. Training
uses batches of 256 samples, bfloat16 mixed precision, and four epochs with at
most 1,000 updates per epoch. The anchor and normalized revision losses receive
equal weights; additional total-latent and regularization losses are disabled.

After each optimization step, the target encoder and EMA prior projector are updated with decay rates of 0.996 and 0.99, respectively. Both EMA modules
remain in evaluation mode and receive no gradients. We select the
representation checkpoint that minimizes the sum of the anchor and normalized
revision losses on the predefined validation subset.

\section{Frozen-Representation Return Readout}
\label{app:downstream}

\paragraph{Readout features and architecture.}
After checkpoint selection, all representation modules and normalization
statistics are frozen. The historical summary is the coordinate-wise mean of
the standardized, unmasked context:
\begin{equation}
\label{eq:app_historical_summary}
\mathbf{s}_{i,t}
=
\frac{1}{126}
\sum_{\tau=0}^{125}\mathbf{x}^{\mathrm{std}}_{i,t-\tau}
\in\mathbb{R}^{40}.
\end{equation}
The readout concatenates this summary, the 16-dimensional prior, the
128-dimensional EMA anchor, and the 128-dimensional revision expressed in the
original latent coordinates:
\begin{equation}
\label{eq:app_readout_input}
\mathbf{u}_{i,t}
=
\left[
\mathbf{s}_{i,t};
\mathbf{P}_{i,t};
\widehat{\mathbf{Z}}_{i,t}^{\mathrm{anc}};
\widehat{\boldsymbol{\Delta}}_{i,t}
\right]
\in\mathbb{R}^{312}.
\end{equation}
The MLP $g_{\omega}$ predicts standardized log returns and comprises
$\operatorname{LayerNorm}(312)$, a $312\rightarrow256$ linear layer, GELU,
dropout of 0.05, and a final $256\rightarrow1$ linear layer.

\paragraph{Training objective.}
The downstream target is the 21-day cumulative log return
$y_{i,t}^{\mathrm{ret}}=R_{i,t}^{(21)}$. Let $\mu_y^{\mathrm{tr}}$ and
$\sigma_y^{\mathrm{tr}}$ denote its mean and standard deviation on the training
split, and define
$y_{i,t}^{\mathrm{norm}}
=(y_{i,t}^{\mathrm{ret}}-\mu_y^{\mathrm{tr}})/\sigma_y^{\mathrm{tr}}$.
The MLP is trained using the SmoothL1 objective
\begin{equation}
\label{eq:app_readout_loss}
\begin{aligned}
\mathcal{L}_{\mathrm{read}}
&=
\mathbb{E}\!\left[
\rho\!\left(
g_{\omega}(\mathbf{u}_{i,t})-y^{\mathrm{norm}}_{i,t}
\right)
\right],
\\
\rho(e)
&=
\begin{cases}
\frac{1}{2}e^2, & \text{if } |e|<1,\\
|e|-\frac{1}{2}, & \text{if } |e|\geq 1.
\end{cases}
\end{aligned}
\end{equation}
Each optimization batch contains the available training samples from one
prediction date, and the order of dates is shuffled in every epoch. Only the
MLP parameters are updated. We use AdamW with a learning rate of $10^{-4}$,
weight decay of $0.05$, and gradient-norm clipping at 1.0. Training runs for six
epochs with at most 1,000 updates per epoch.

\paragraph{Selection and prediction.}
We select the readout checkpoint with the lowest sample-averaged SmoothL1 loss
on the predefined validation subset. At inference, the model uses the unmasked
historical context and fixed prior statistics to construct the four readout
inputs. The complete readout $f_{\omega}$ in
Equation~\ref{eq:downstream_readout} includes the fixed inverse
standardization:
\begin{equation}
\widehat{y}_{i,t}
=
f_{\omega}(\mathbf{u}_{i,t})
=
\mu_y^{\mathrm{tr}}
+\sigma_y^{\mathrm{tr}}g_{\omega}(\mathbf{u}_{i,t}).
\end{equation}
The test set is evaluated only after selecting both the representation and
readout checkpoints.

\section{Baseline Implementation}
\label{app:implementation_details}

\paragraph{Common data and prediction task.}
The 13 baselines share the prepared stock panels, evaluation
targets, and validation and test periods with \textsc{Stock-JEPA}.
All baselines are trained on 2014--2021 data, whereas
\textsc{Stock-JEPA} uses 2014--2015 for initial S1 calibration
and 2016--2021 for representation and readout training.
Canonical inputs contain 126 trading sessions with 28 stock
features and 12 market features. The prepared training split contains samples with $\mathrm{prediction\_date}\geq$ January 1, 2014 and
$\mathrm{future\_end}<$ January 1, 2022.
The prepared validation split contains samples with
$\mathrm{prediction\_date}\geq$ January 1, 2022 and
$\mathrm{future\_end}<$ January 1, 2024.
Test forecast dates begin on January 1, 2024.
Samples whose target windows cross split boundaries
are excluded. These dates define the prepared splits.

Return-regression models predict the standardized 21-day
cumulative log return. Predictions are transformed back to
log-return units before evaluation.
Model-specific adapters organize the common inputs into
sequences, feature tokens, patches, or same-date cross-sections.

\paragraph{Baseline training configurations.}
The supervised neural baselines are trained for six epochs with
at most 1,000 updates per epoch, AdamW with learning rate
$3\times10^{-4}$, weight decay $0.01$, gradient-norm clipping
at 1.0, and a nominal batch size of 256. Model configurations
are fixed before test evaluation, and checkpoints are selected
by the lowest validation SmoothL1 loss. FactorVAE and PRISM-VQ
additionally retain their implemented training-only auxiliary
objectives.

The FMB-style baseline \citep{fama1973risk}
fits regularized cross-sectional regressions separately on each
training date and averages their coefficients for out-of-sample
prediction, with regularization selected by validation MSE.
IPCA \citep{kelly2019characteristics} estimates characteristic-based
factor loadings and latent factors by alternating least squares
on training data, selects the factor count by validation MSE,
and forecasts using the training-period mean factor rather than
factors inferred from held-out returns.

T-JEPA, TS-JEPA, and Fin-JEPA first learn representations and
then train a linear return readout with the encoder frozen.
Pretraining uses at most 12 epochs of 1,000 updates, AdamW with
learning rate $3\times10^{-4}$ and weight decay $0.04$.
The lowest validation representation loss selects the
pretraining checkpoint. The downstream readout follows the
six-epoch validation-selection protocol above.

T-JEPA forms feature tokens from the 126-session channel
histories. TS-JEPA partitions the input into 18 patches of
seven sessions. Fin-JEPA uses the last 30 context sessions
and a 10-session future representation target within the
canonical windows; its downstream return horizon remains
21 sessions.
Stock-JEPA architecture and readout settings are specified
in Appendices~\ref{app:implementation} and~\ref{app:downstream}.

\section{Dataset Construction and Temporal Protocol}
\label{app:datasets}
\subsection{Dataset construction and statistics.}
We evaluate on two broad-market equity panels, \textsc{CN\_ALL} and \textsc{US\_ALL}, sourced from Wind and Norgate Data, respectively. Covering the full available market universes of 5,669 and 10,152 historically tradable securities, including delisted stocks, these datasets combine broad cross-sectional coverage with a long observation period and changing tradable universes. Each sample uses 126 historical trading sessions to predict the subsequent 21-session log return. The prepared training,
validation, and test splits begin in 2014, 2022,
and 2024, respectively, with testing extending
through June 2026. Samples whose target windows cross split boundaries
are excluded. Global normalization statistics are
estimated from the training split only, whereas
Stage~1 input scaling uses the historical fitting
pool associated with each prediction block.
Table~\ref{tab:dataset_statistics} reports prepared
sample counts before model-specific filtering.
Stock-JEPA representation and readout training begin
in 2016, when out-of-sample priors become available;
2014--2015 provides the initial Stage~1 fitting history.

\begin{table}[htbp]
\centering
\small
\setlength{\tabcolsep}{5pt}
\caption{Statistics of the full datasets after sample construction and temporal boundary purging}
\label{tab:dataset_statistics}
\begin{tabular}{lrrrrrl}
\toprule
Dataset
& Securities
& Valid samples
& Train
& Validation
& Test
& Test forecast dates \\
\midrule
\textsc{CN\_ALL}
& 5,669
& 8,839,281
& 3,883,006
& 1,990,842
& 2,965,433
& 2024-01-02--2026-06-26 \\
\textsc{US\_ALL}
& 10,152
& 15,882,433
& 7,674,081
& 3,169,353
& 5,038,999
& 2024-01-02--2026-06-24 \\
\bottomrule
\end{tabular}
\end{table}

\subsection{Expanding-window prior estimation.}
\label{app:temporal_protocol}
Stage~1 follows the expanding-window estimation principle of \citet{gu2020empirical} (Figure~\ref{fig:temporal_protocol}).
Starting from 2014--2015, it generates annual out-of-sample priors for 2016--2021. Each fit uses only historical samples whose complete outcome windows end before the corresponding prediction period, with input scaling and residual scales estimated from the same pool. For validation and testing, Stage~1 is fitted using eligible history available before each period and held fixed within that period. Stages~2 and~3 train on eligible 2016--2021 samples paired with their annual out-of-sample priors; checkpoints are selected on validation data and fixed during testing.
\begin{figure}[t]
    \centering
    \includegraphics[width=0.8\linewidth]{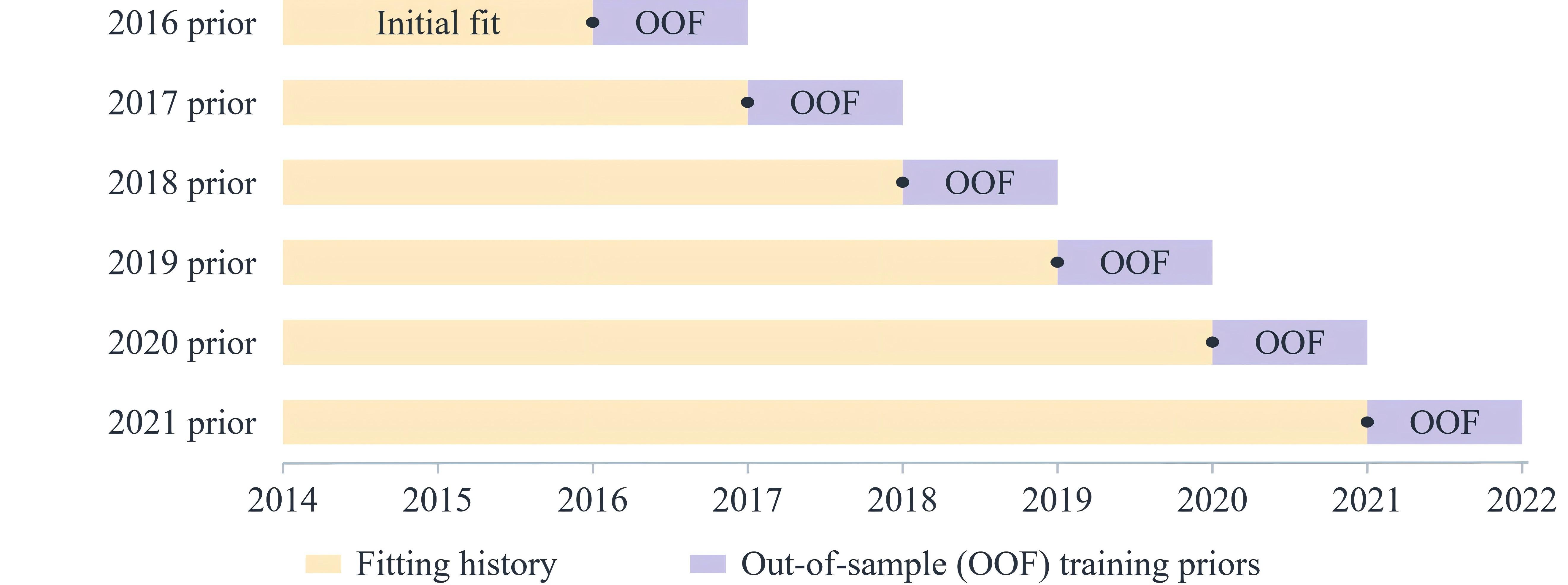}
    \caption{Schematic of expanding-window prior estimation. Yellow bars indicate fitting histories, and black dots mark fitting or refitting points.}
    \label{fig:temporal_protocol}
\end{figure}

\section{Cross-Sectional and Portfolio Evaluation}
\label{app:portfolio_details}

\paragraph{Information coefficients.}
For prediction date $t$, let $\mathcal U_t$ denote the evaluated
asset cross-section, $\widehat y_{i,t}$ the prediction, and
$y_{i,t}$ the realized 21-day log return.
We compute
\begin{equation}
\begin{aligned}
\mathrm{IC}_t
&=
\operatorname{Corr}_{i\in\mathcal U_t}
(\widehat y_{i,t},y_{i,t}),\\
\mathrm{RankIC}_t
&=
\operatorname{Corr}_{i\in\mathcal U_t}
\left(
\operatorname{rank}(\widehat y_{i,t}),
\operatorname{rank}(y_{i,t})
\right).
\end{aligned}
\end{equation}
Ranks use averages for ties.
Dates with fewer than three valid prediction--target pairs
or constant predictions or targets are excluded.
Reported IC and RankIC are equally weighted means
across valid dates and are multiplied by 100 in the tables.

\paragraph{Portfolio formation.}
We use a fixed selection size of $K=30$, consistent with
the Top-30 evaluation design used by \citet{li2024master},
while specifying our own holding and accounting protocol.
On each prediction date $t$, we select the $K$
highest-ranked assets for CN\_ALL and the $K$ highest-
and lowest-ranked assets for US\_ALL from the eligible
evaluation universe.
At least $K$ and $2K$ eligible assets are required,
respectively.
Ties are resolved by asset order in the prediction file.

Denoting the selected sets by $\mathcal L_t$ and
$\mathcal S_t$, each new batch uses weights
\begin{equation}
w^{\mathrm{CN}}_{i,t}
=
\frac{\mathbf{1}\{i\in\mathcal L_t\}}{K},
\qquad
w^{\mathrm{US}}_{i,t}
=
\frac{
\mathbf{1}\{i\in\mathcal L_t\}
-
\mathbf{1}\{i\in\mathcal S_t\}
}{K}.
\end{equation}
CN\_ALL therefore uses equal-weighted long-only batches,
whereas US\_ALL uses equal long and short notionals.
The number of selected stocks applies to each batch,
not to the aggregate account across overlapping batches.

Predictions based on information through the close of
session $t$ are executed at the next open, $t+1$.
Each batch maintains fixed position units until the close
of $t+H$, where $H=21$.
Execution and daily valuation use the datasets'
consistently adjusted open and close prices.
Realized portfolio returns are neither clipped nor winsorized.

\paragraph{Overlapping batches and capital allocation.}
All overlapping batches share one cash and net asset
value (NAV) account, initialized with unit cash.
Let $V_t$ and $C_t$ denote closing NAV and cash before
the next session's entry.
With proportional transaction-cost rate $c=0.0005$,
the new batch's side budget is
\begin{equation}
B_t^{\mathrm{CN}}
=
\min\left\{
\frac{V_t}{H},
\frac{\max(C_t,0)}{1+c}
\right\},
\qquad
B_t^{\mathrm{US}}
=
\frac{V_t}{H}.
\end{equation}
The CN budget covers purchases and entry costs without
cash borrowing.
For US\_ALL, $B_t$ is allocated to each leg, giving
initial gross notional $2B_t$ for the new batch.
Signed position units are
\begin{equation}
q_i^{(t)}
=
\frac{B_t w_{i,t}}{P^{\mathrm{open}}_{i,t+1}}.
\end{equation}
These are fractional units in the adjusted-price coordinate.
Existing batches are not rebalanced to restore their
initial weights as prices change.

\paragraph{Transaction costs and turnover.}
We charge 5 basis points per unit of executed notional
on both purchases and sales, including short sales
and short covering.
This rate is consistent with the low-cost specification
examined by \citet{conlon2025factorcovariance};
here, costs are applied directly to executed trades.

Let $\mathcal E_d$ contain all batch-level executions
on trading day $d$.
For execution $e$, let $\Delta q_e$ be the signed change
in position units and $P_e$ its execution price.
Gross traded notional and transaction costs are
\begin{equation}
A_d
=
\sum_{e\in\mathcal E_d}|\Delta q_e|P_e,
\qquad
F_d=cA_d,
\qquad c=0.0005.
\end{equation}
Entry trades use opening prices, and scheduled exits
use closing prices.
Costs are charged at both entry and exit, including
the final liquidation.
Opposing trades from different batches are not netted
before calculating fees.

Daily turnover is reported as
\begin{equation}
\mathrm{TO}_d
=
\frac{A_d}{2V_{d-1}},
\qquad
\overline{\mathrm{TO}}
=
\frac{1}{D}\sum_{d=1}^{D}\mathrm{TO}_d.
\end{equation}
Thus, transaction costs relative to preceding NAV equal
$F_d/V_{d-1}=2c\,\mathrm{TO}_d$.
This turnover measure uses executed notional rather than
differences between successive target-weight vectors.

\paragraph{Daily accounting.}
Cash reflects every execution and its associated fee,
including short-sale proceeds and short-covering payments:
\begin{equation}
C_d
=
C_{d-1}
-
\sum_{e\in\mathcal E_d}\Delta q_e P_e
-
F_d.
\end{equation}
With signed aggregate holdings $Q_{i,d}$ after the day's
executions, closing NAV and daily net returns are
\begin{equation}
V_d
=
C_d+\sum_i Q_{i,d}P^{\mathrm{close}}_{i,d},
\qquad
R_d^{\mathrm{net}}
=
\frac{V_d}{V_{d-1}}-1.
\end{equation}
Aggregate exposure evolves with prices and NAV.
Evaluation includes the initial position build-up
and continues through the final batch's scheduled exit.

The backtest assumes execution at the scheduled quoted
prices and does not model market impact, execution
restrictions, stock-borrow availability, borrowing fees,
or funding interest.
It imposes no early stop-loss or broker margin liquidation.
If aggregate NAV becomes non-positive at an observed
opening, closing, or post-trade valuation, the run is
terminated and reported as an account failure.

\paragraph{Performance aggregation.}
Using $D$ daily returns and 252 trading days per year,
arithmetic annualized return and Sharpe ratio are
\begin{equation}
\mathrm{Net\,AR}
=
252\,\overline R^{\mathrm{net}},
\qquad
\mathrm{Net\,SR}
=
\sqrt{252}\,
\frac{\overline R^{\mathrm{net}}}
{\operatorname{sd}(R^{\mathrm{net}})},
\end{equation}
where $\operatorname{sd}$ is the sample standard deviation
and the risk-free rate is zero.
Maximum drawdown uses the same daily closing NAV series,
including initial wealth $V_0=1$:
\begin{equation}
\mathrm{MDD}
=
\min_{0\leq d\leq D}
\left(
\frac{V_d}{\max_{0\leq u\leq d}V_u}-1
\right).
\end{equation}
Net AR and MDD are multiplied by 100 in the tables.

For a prescribed set of $S$ runs that all complete,
each metric is summarized by its arithmetic mean
and sample standard deviation across runs.
If any prescribed run fails, the failure is reported
rather than silently excluded from the aggregate.

\section{Component Ablation Details}
\label{app:component_ablation}

\paragraph{Pipeline configurations.}
S1-only uses the 16-dimensional prior vector $\mathbf{P}_{i,t}$,
comprising predictive means and log-scales, with scales clipped
below at $10^{-4}$ before taking logarithms.
S3-only uses the 40-dimensional historical summary
$\mathbf{s}_{i,t}$ obtained by mean-pooling stock and market
descriptors.
S2-only supplies a 128-dimensional prediction of the complete
future embedding to a linear readout, whereas S1+S2 supplies
$[\mathbf{P}_{i,t};\widehat{\mathbf{Z}}_{i,t}^{\mathrm{anc}};
\widehat{\boldsymbol{\Delta}}_{i,t}]\in\mathbb{R}^{272}$.
Both use training-set standardization and fit linear readouts
to future 21-day returns with representations frozen.
The full model adds $\mathbf{s}_{i,t}$ to the S1+S2 features
and replaces the linear readout with an MLP.

\paragraph{Representation training.}
S2-only is trained independently from random initialization
without a prior branch, prior initialization, financial semantic
heads, or prior loss. It predicts targets from an EMA target
encoder and retains diagonal target whitening and
variance-covariance regularization.
S1+S2 uses the prior-anchored representation checkpoint for the
corresponding seed, whose predictor estimates future latent
revisions relative to a financial anchor.

\paragraph{Results and interpretation.}
Table~\ref{tab:component_ablation_full} shows that S1+S2
outperforms every standalone configuration across all four
mean metrics in both markets.
These gains with a linear readout support the predictive value
of the prior-anchored features.
Adding the historical summary and an MLP further raises RankIC
by 1.28 and 1.07 percentage points and net Sharpe by 0.40 and
0.24 on CN\_ALL and US\_ALL, respectively.
The full model achieves the highest means on all four metrics.
Matched input ablations are reported separately in
Appendix~\ref{app:matched_mlp_readout}.

\begin{table}[htbp]
\centering
    \caption{Pipeline component ablations
    (mean$\pm$SD over three seeds).
    Bold marks the highest mean within each market.
    IC and RankIC are correlations multiplied by 100;
    Net AR is arithmetic annualized return (\%).
    S3-only denotes the history-only baseline.}
    \label{tab:component_ablation_full}
    \small
    \setlength{\tabcolsep}{5.2pt}
    \renewcommand{\arraystretch}{1.10}
    \begin{tabular}{lcccc}
    \toprule
    Configuration
    & IC (\%) $\uparrow$
    & RankIC (\%) $\uparrow$
    & Net AR (\%) $\uparrow$
    & Net Sharpe $\uparrow$ \\
    \midrule
    \multicolumn{5}{l}{\textit{CN\_ALL}} \\
    S1-only
    & $4.18\pm3.10$ & $5.15\pm3.92$
    & $14.03\pm6.41$ & $0.55\pm0.62$ \\
    S2-only
    & $4.40\pm2.95$ & $6.21\pm3.17$
    & $21.43\pm5.96$ & $0.85\pm0.27$ \\
    S3-only
    & $4.21\pm0.58$ & $6.76\pm0.43$
    & $21.04\pm3.84$ & $0.89\pm0.14$ \\
    S1+S2
    & $8.86\pm1.28$ & $9.94\pm1.23$
    & $33.86\pm11.28$ & $1.10\pm0.27$ \\
    \textbf{S1+S2+S3}
    & $\boldsymbol{10.70\pm0.61}$
    & $\boldsymbol{11.22\pm0.24}$
    & $\boldsymbol{45.66\pm5.12}$
    & $\boldsymbol{1.50\pm0.28}$ \\
    \midrule
    \multicolumn{5}{l}{\textit{US\_ALL}} \\
    S1-only
    & $7.70\pm3.83$ & $4.62\pm3.20$
    & $23.58\pm6.30$ & $0.63\pm0.30$ \\
    S2-only
    & $6.91\pm4.71$ & $7.92\pm2.80$
    & $39.25\pm24.36$ & $0.81\pm0.10$ \\
    S3-only
    & $6.94\pm1.63$ & $8.72\pm1.10$
    & $35.41\pm1.32$ & $0.73\pm0.10$ \\
    S1+S2
    & $10.40\pm1.27$ & $11.89\pm0.66$
    & $45.38\pm13.30$ & $1.00\pm0.19$ \\
    \textbf{S1+S2+S3}
    & $\boldsymbol{11.67\pm0.79}$
    & $\boldsymbol{12.96\pm0.48}$
    & $\boldsymbol{68.20\pm15.92}$
    & $\boldsymbol{1.24\pm0.20}$ \\
    \bottomrule
    \end{tabular}
\end{table}

\section{Matched MLP Readout Ablation}
\label{app:matched_mlp_readout}

\paragraph{Setup.}
We test the incremental utility of revisions under a matched
non-linear readout. For each market and seed, all configurations
reuse the same frozen checkpoint with the following inputs:
\begin{align*}
\mathrm{R0}:&\quad [\mathbf{s};\mathbf{P}],
& \mathrm{R1}:&\quad [\mathbf{s};\mathbf{P};\mathbf{z}^{a}],\\
\mathrm{R2}:&\quad
[\mathbf{s};\mathbf{P};\mathbf{z}^{a};
\widehat{\boldsymbol{\Delta}}],
& \mathrm{R3}:&\quad
[\mathbf{s};\mathbf{P};\mathbf{z}^{a};\mathbf{h}].
\end{align*}
Here, $\mathbf{s}$ and $\mathbf{P}$ are the 40-dimensional
state and 16-dimensional prior vectors; the anchor
$\mathbf{z}^{a}$, revision $\widehat{\boldsymbol{\Delta}}$,
and context representation $\mathbf{h}$ are each 128-dimensional. 
\paragraph{Matched training and evaluation.}
All configurations use a $312\!\rightarrow\!256\!\rightarrow\!1$
MLP with GELU and dropout of 0.05.
Inputs are standardized using training-set statistics, with
unused coordinates set to zero and no concatenated LayerNorm.
Within each market and seed, MLP initialization, training samples,
and date ordering are identical.
Only the MLP is trained; representation parameters and
normalization buffers remain frozen.
We use AdamW and SmoothL1 loss ($\beta=1$) on normalized
21-day returns, without a ranking loss, for six epochs with
at most 1,000 date-group updates per epoch.
The checkpoint with the lowest loss on the designated
validation-selection subset is retained.
Data splits, target processing, and portfolio evaluation follow
the main experiments, including 5-bps transaction costs per
unit of executed notional at entry and exit.
We reuse the main experiments' three seeds and frozen checkpoints.
This ablation replaces the main readout's LayerNorm with
training-set feature standardization and retrains only the MLP.
R2 therefore differs from the main evaluation in both input
normalization and realized training randomness.

\begin{table}[!htbp]
    \centering
    \caption{Matched MLP readout ablations on frozen representations.
    Entries are mean$\pm$SD over three seeds.
    Best means within each market are bold.
    IC and RankIC are correlations multiplied by 100;
    Net AR is arithmetic annualized return (\%).}
    \label{tab:matched_mlp_readout}
    \small
    \setlength{\tabcolsep}{5.2pt}
    \renewcommand{\arraystretch}{1.05}
    \begin{tabular}{lcccc}
        \toprule
        Configuration
        & IC (\%) $\uparrow$
        & RankIC (\%) $\uparrow$
        & Net AR (\%) $\uparrow$
        & Net Sharpe $\uparrow$ \\
        \midrule
        \multicolumn{5}{l}{\textit{CN\_ALL}} \\
        R0: State + Prior
        & $8.46\pm0.61$
        & $9.72\pm0.60$
        & $39.46\pm7.56$
        & $1.47\pm0.30$ \\
        R1: + Anchor
        & $9.68\pm0.97$
        & $9.24\pm2.12$
        & $48.83\pm11.29$
        & $1.46\pm0.32$ \\
        \textbf{R2: + Anchor + Revision}
        & $\mathbf{11.04\pm0.39}$
        & $\mathbf{11.26\pm0.63}$
        & $\mathbf{59.46\pm4.17}$
        & $\mathbf{1.87\pm0.23}$ \\
        R3: + Anchor + Context
        & $10.55\pm0.67$
        & $10.13\pm1.04$
        & $56.65\pm4.49$
        & $1.79\pm0.02$ \\
        \midrule
        \multicolumn{5}{l}{\textit{US\_ALL}} \\
        R0: State + Prior
        & $7.62\pm0.27$
        & $8.28\pm0.60$
        & $21.74\pm9.75$
        & $0.44\pm0.39$ \\
        R1: + Anchor
        & $10.28\pm0.06$
        & $9.85\pm0.22$
        & $50.63\pm11.26$
        & $0.78\pm0.30$ \\
        \textbf{R2: + Anchor + Revision}
        & $\mathbf{12.11\pm0.55}$
        & $\mathbf{13.52\pm0.58}$
        & $\mathbf{74.33\pm10.58}$
        & $\mathbf{1.31\pm0.21}$ \\
        R3: + Anchor + Context
        & $10.68\pm0.91$
        & $11.25\pm0.61$
        & $70.37\pm3.08$
        & $1.01\pm0.12$ \\
        \bottomrule
    \end{tabular}
\end{table}
\paragraph{Results.}
Table~\ref{tab:matched_mlp_readout} shows that R2 achieves
the highest means across all four metrics in both markets.
Adding the anchor (R1 versus R0) improves all four metrics
on US\_ALL. On CN\_ALL, IC and net AR improve, but RankIC
falls from 9.72\% to 9.24\% and net Sharpe is nearly
unchanged (1.47 versus 1.46).
Adding the revision (R2 versus R1) improves all four metrics
in both markets, raising RankIC by 2.02 and 3.67 percentage
points and net Sharpe by 0.41 and 0.53 on CN\_ALL and
US\_ALL, respectively.
With the state, prior, and anchor retained, these gains
support the revision's incremental predictive and portfolio value.

R2 versus R3 compares predicted revisions with direct context
inputs under identical input dimensionality and MLP capacity.
R2 achieves higher means on all four metrics in both markets,
including RankIC gains of 1.13 and 2.27 percentage points
and net Sharpe gains of 0.08 and 0.30 on CN\_ALL and
US\_ALL, respectively.
Thus, the benefit extends beyond simply supplying the
historical context representation to the readout.
These results suggest that expressing historical information
as a predicted revision relative to the financial anchor
makes it more useful for downstream ranking and portfolio
construction.
\section{Controlled Ablation of Representation-Learning Mechanisms}
\label{app:controlled_representation_ablation}

\paragraph{Configurations.}
Separate-EMA implements our representation-learning mechanism under the controlled ablation protocol: the online prior projector receives separate anchor supervision, while revision prediction uses an EMA anchor without propagating gradients to the projector. Separate-online retains this objective separation and gradient routing but uses the online anchor during training and inference. Joint-online instead jointly optimizes total-representation prediction, removing separate anchor supervision and gradient blocking. Ordinary-JEPA directly predicts the future representation, with prior conditioning zeroed and the unused projector frozen. The separate objectives share one batch and backward pass.

\paragraph{Controlled training.}
For each market and seed, all mechanisms continue from the same S2 checkpoint with matched initialization, data splits, masking, and budget: four epochs of 1,000 updates with batch size 256. We use AdamW (learning rate $10^{-4}$, weight decay 0.01), gradient clipping at 1.0, target-encoder EMA decay 0.996, and anchor EMA decay 0.99 where applicable. Additional variance, covariance, and orthogonality regularizers are disabled; the epoch-four checkpoint is retained. All mechanisms use unclipped diagonal EMA whitening with detached, mechanism-specific statistics. For $\widehat z=a+Du+\mu$, the joint loss is
\begin{equation}
    \mathcal{L}_{\mathrm{joint}}
    =\frac{1}{d}\left\|D^{-1}(\widehat z-y)\right\|_2^2,
\end{equation}
where $y$ is the detached future target, $a$ the anchor,
$u$ the predictor output, and $D,\mu$ the whitening scale and mean.
This protocol evaluates fixed-budget continuation.

\paragraph{Matched readout.}
All representation parameters and buffers are frozen. Every mechanism uses the same input
$[\mathbf{s}_{40};\mathbf{P}_{16};\widehat z_{128}]
\in\mathbb{R}^{184}$
and a $184\!\rightarrow\!256\!\rightarrow\!1$ MLP, with matched initialization and training-date ordering. Here, $\widehat z$ is the anchor-plus-revision prediction or the direct Ordinary-JEPA prediction. Other readout settings and validation selection follow Appendix~\ref{app:matched_mlp_readout}; test splits and portfolio evaluation follow the main experiments.

\paragraph{Results and interpretation.}
Table~\ref{tab:controlled_representation_ablation} shows that
Separate-EMA achieves the highest means on three of four
metrics in each market, including both portfolio metrics.
Relative to Joint-online, it raises net AR by 11.18 and
6.53 percentage points and net Sharpe by 0.26 and 0.18
on CN\_ALL and US\_ALL, respectively.
Joint-online still leads CN\_ALL RankIC and US\_ALL IC.
Separate-online also achieves higher mean net AR and
Sharpe than Joint-online in both markets, despite lower RankIC.
Holding separate supervision and gradient routing fixed,
the EMA anchor yields small changes on CN\_ALL and improves
all four US\_ALL means, including net AR by 5.24 percentage
points and net Sharpe by 0.14.

Separate-EMA also outperforms Ordinary-JEPA on all four mean
metrics in both markets, with RankIC gains of 1.94 and
3.40 percentage points on CN\_ALL and US\_ALL, respectively.
All readouts receive the same state and prior inputs and
equally sized predicted representations.
These gains support the downstream value of prior-anchored
representation learning beyond supplying the prior to the
readout.
Together, the comparisons favor separate supervision with
an EMA anchor for downstream portfolio performance under
the matched continuation protocol.
\begin{table}[!htbp]
    \centering
    \caption{Controlled ablation of representation-learning mechanisms.
    Entries are mean$\pm$SD across seeds;
    best means within each market are bold.
    IC and RankIC are correlations multiplied by 100;
    Net AR is arithmetic annualized return (\%).}
    \label{tab:controlled_representation_ablation}
    \small
    \setlength{\tabcolsep}{5.2pt}
    \renewcommand{\arraystretch}{1.05}

    \begin{tabular}{lcccc}
        \toprule
        Training mechanism
        & IC (\%) $\uparrow$
        & RankIC (\%) $\uparrow$
        & Net AR (\%) $\uparrow$
        & Net Sharpe $\uparrow$ \\
        \midrule

        \multicolumn{5}{l}{\textit{CN\_ALL}} \\
        \textbf{Separate-EMA}
        & $\mathbf{10.90\pm0.33}$
        & $11.11\pm0.94$
        & $\mathbf{55.42\pm5.46}$
        & $\mathbf{1.51\pm0.26}$ \\
        Separate-online
        & $10.85\pm0.30$
        & $11.13\pm0.68$
        & $53.92\pm3.66$
        & $1.48\pm0.26$ \\
        Joint-online
        & $10.43\pm0.53$
        & $\mathbf{11.49\pm0.60}$
        & $44.24\pm2.35$
        & $1.25\pm0.12$ \\
        Ordinary-JEPA
        & $8.73\pm0.81$
        & $9.17\pm0.59$
        & $40.77\pm8.65$
        & $1.01\pm0.23$ \\

        \midrule
        \multicolumn{5}{l}{\textit{US\_ALL}} \\
        \textbf{Separate-EMA}
        & $12.27\pm0.34$
        & $\mathbf{12.45\pm0.74}$
        & $\mathbf{67.13\pm13.68}$
        & $\mathbf{1.27\pm0.13}$ \\
        Separate-online
        & $11.58\pm0.32$
        & $12.14\pm0.75$
        & $61.89\pm10.87$
        & $1.13\pm0.10$ \\
        Joint-online
        & $\mathbf{12.47\pm0.07}$
        & $12.37\pm1.20$
        & $60.60\pm7.46$
        & $1.09\pm0.20$ \\
        Ordinary-JEPA
        & $7.64\pm0.24$
        & $9.05\pm0.90$
        & $32.39\pm6.59$
        & $0.50\pm0.09$ \\

        \bottomrule
    \end{tabular}
\end{table}

\section{Signal Recovery in a Controlled Dynamic-Factor Environment}
\label{app:simulation_recovery}

\paragraph{Data-generating process.}
We examine which known financial signal components are recoverable
from learned representations.
Each synthetic world contains 256 assets and two independent,
unit-variance stationary states:
\begin{equation}
s_{t+1}=\rho_s s_t+\sqrt{1-\rho_s^2}\epsilon^s_{t+1},
\qquad
f_{t+1}=\rho_f f_t+\sqrt{1-\rho_f^2}\epsilon^f_{t+1},
\end{equation}
where $\rho_s=0.99$, $\rho_f=0.80$, and the state innovations
are independent standard Gaussian variables.
Daily log returns follow
\begin{equation}
\ell_{i,t+1}
=
a_s\beta_i s_t
+\gamma a_f(\lambda_i+\kappa c_i s_t)f_t
+\sigma_m b_i\xi_{t+1}
+\sigma_e\varepsilon_{i,t+1},
\end{equation}
with $a_s=0.0005$, $a_f=0.001$, $\kappa=0.5$,
$\sigma_m=0.008$, and $\sigma_e=0.015$.
Fixed exposures $\beta_i,\lambda_i,c_i$ are independently
drawn from standard Gaussian distributions and cross-sectionally
standardized; $b_i\sim\mathcal U(0.5,1.5)$.
The common and idiosyncratic noise terms are independent
of the current information set.

Models observe asset exposures, noisy proxies
$q^s_t=s_t+0.5\eta^s_t$ and
$q^f_{i,t}=\lambda_i f_t+0.5\eta^f_{i,t}$,
and causal historical summaries, forming 28 stock-level
and 12 market-level channels.
The true states and diagnostic targets are never used as
inputs or supervision for representation learning.

\paragraph{Diagnostic targets.}
For $H=21$, let $C_H(\rho)=\sum_{k=0}^{H-1}\rho^k$.
Conditional on the true current states, expected cumulative
log return decomposes into
\begin{align}
m^{\mathrm{slow}}_{i,t}
&=a_s\beta_i C_H(\rho_s)s_t,\\
m^{\mathrm{fast}}_{i,t}
&=\gamma a_f\left[
\lambda_i C_H(\rho_f)f_t
+\kappa c_i C_H(\rho_s\rho_f)s_tf_t
\right].
\end{align}
The fast component includes both the direct fast-factor
contribution and its interaction with the slow state.
We also probe the independent future return noise
\begin{equation}
u_{i,t}
=
\sum_{k=1}^{H}
\left(\sigma_m b_i\xi_{t+k}
+\sigma_e\varepsilon_{i,t+k}\right)
\end{equation}
as a negative control.
This noise excludes unpredictable changes in the latent
states and therefore is not the complete return forecast residual.

\paragraph{Representation training.}
We use $\gamma=1$ and 2,016 representation-training dates.
A frozen Ridge prior is fitted on an earlier, separate
4,000-date calibration block to predict return, volatility,
log drawdown, and market-relative return at horizons of
5 and 21 days.
Contexts contain 126 sessions, and JEPA targets use
21-session future windows.
Validation and test sets contain 504 and 1,008 prediction
dates, respectively, with 21-session embargoes between
chronological partitions.

We compare Separate-EMA, Separate-online, Joint-online,
Ordinary-JEPA, and Direct supervised.
Separate variants use distinct anchor and revision objectives,
with the anchor detached from the revision loss;
Separate-EMA uses an EMA anchor and Separate-online
uses the online anchor.
Joint-online uses a single summed-latent objective without
detaching the anchor.
Ordinary-JEPA predicts future representations without
the prior-relative decomposition, while Direct supervised
predicts returns directly.

All models are initialized from scratch with a 128-dimensional,
four-layer, eight-head context Transformer and trained for
2,000 updates using AdamW
(learning rate $10^{-4}$, weight decay 0.01),
batch size 128, and gradient clipping at 1.0.
The target-encoder EMA decay is 0.996;
the prior EMA decay in Separate-EMA is 0.99.
JEPA checkpoints use the final update, while Direct supervised
selects its checkpoint by validation return MSE.
Diagnostic targets are not used for checkpoint selection.

\paragraph{Frozen linear probes.}
After training, all representation parameters and statistical
buffers are frozen.
For decomposed models, we probe the anchor, revision,
and their concatenation:
$Z^{\mathrm{anc}}$, $\widehat{\Delta}$, and
$[Z^{\mathrm{anc}};\widehat{\Delta}]$,
with dimensions 128, 128, and 256.
Ordinary-JEPA uses its predicted future representation;
Direct supervised uses its context representation.
Probes receive no additional raw features or prior statistics.

Each representation--target pair uses a Ridge probe with
an unpenalized intercept.
Probes are fitted on a separate trajectory from the same world,
using 252 training dates and 126 validation dates separated
by a 21-session embargo.
This trajectory is excluded from representation training.
Features are standardized using probe-training statistics,
and the regularization coefficient is selected by validation
MSE from $\{10^{-4},10^{-2},1,10^2,10^4\}$.
Evaluation uses the original 1,008-date test set.

\paragraph{Recovery metric and aggregation.}
For target $q$ and probe prediction $\widehat q$, recovery
is the held-out coefficient of determination expressed
as a percentage:
\begin{equation}
\mathrm{Recovery}(q)
=
100\left(
1-
\frac{\sum_{(i,t)\in\mathcal T}
(q_{i,t}-\widehat q_{i,t})^2}
{\sum_{(i,t)\in\mathcal T}
(q_{i,t}-\overline q_{\mathcal T})^2}
\right).
\end{equation}
The metric pools all test asset--date observations.
A value of $100\%$ indicates perfect recovery;
$0\%$ matches the test-mean constant reference.
Negative values are retained and indicate worse performance
than this reference.

We use five independent world seeds, each with three
optimization seeds.
Scores are first averaged within each world, then summarized
by the mean and sample standard deviation across worlds.
All runs are retained; reported deviations describe
cross-world variation, not confidence intervals.
Recovery measures linear accessibility, and branch-wise
scores are not additive.
Concatenation increases probe input capacity.
The generative slow--fast components also differ from
the theoretical prior-conditional anchor--revision decomposition.

\begin{table}[!htbp]
    \centering
    \caption{Frozen linear-probe recovery in the controlled
    dynamic-factor environment.
    Entries are held-out $R^2$ (\%), averaged over three
    optimization seeds within each world and reported as
    mean$\pm$sample standard deviation across five worlds.
    Best means for predictable components are bold;
    independent future return noise serves as a negative control.}
    \label{tab:simulation_recovery}
    \small
    \setlength{\tabcolsep}{4pt}
    \renewcommand{\arraystretch}{1.05}
    \begin{tabular}{llccc}
        \toprule
        Training mechanism
        & Probe input
        & Slow component
        & Fast component
        & Noise \\
        \midrule
        Separate-EMA
        & Anchor
        & $\mathbf{66.14\pm10.49}$
        & $-2.49\pm2.26$
        & $-4.55\pm2.34$ \\
        & Revision
        & $-2.70\pm8.04$
        & $49.25\pm36.22$
        & $-16.95\pm14.51$ \\
        & Anchor + Revision
        & $62.22\pm17.52$
        & $\mathbf{55.24\pm37.93}$
        & $-29.36\pm25.87$ \\
        \midrule
        Separate-online
        & Anchor
        & $66.05\pm10.43$
        & $-2.56\pm2.34$
        & $-4.57\pm2.32$ \\
        & Revision
        & $-8.70\pm33.67$
        & $44.28\pm36.05$
        & $-15.76\pm12.92$ \\
        & Anchor + Revision
        & $60.96\pm16.97$
        & $45.76\pm38.14$
        & $-27.01\pm22.98$ \\
        \midrule
        Joint-online
        & Anchor
        & $62.67\pm8.73$
        & $-3.30\pm2.79$
        & $-4.69\pm2.85$ \\
        & Revision
        & $-20.59\pm19.03$
        & $43.02\pm44.27$
        & $-17.46\pm12.54$ \\
        & Anchor + Revision
        & $59.29\pm17.42$
        & $48.61\pm45.13$
        & $-26.96\pm22.08$ \\
        \midrule
        Ordinary-JEPA
        & Predicted future
        & $-42.58\pm120.61$
        & $45.84\pm42.68$
        & $-23.44\pm19.73$ \\
        Direct supervised
        & Context
        & $-37.61\pm58.70$
        & $41.56\pm25.79$
        & $-48.20\pm45.21$ \\
        \bottomrule
    \end{tabular}
\end{table}

\paragraph{Results and interpretation.}
Table~\ref{tab:simulation_recovery} shows complementary
signal recovery from the anchor and revision.
For Separate-EMA, the anchor achieves 66.14\% recovery of
the slow component but negative recovery of the fast component.
The revision exhibits the opposite pattern, recovering
49.25\% of the fast component while its slow-component
recovery remains negative.
Concatenation retains substantial recovery of both
components (62.22\% slow and 55.24\% fast), providing
joint access to persistent and faster predictable variation.
It does not, however, improve both scores over their
respective individual branches.

Ordinary-JEPA and Direct supervised also recover the fast
component (45.84\% and 41.56\%), but neither achieves positive
mean slow-component recovery under these linear probes.
This contrast suggests that explicit anchoring helps retain
linearly accessible persistent variation alongside faster
signals.
Similar branch specialization appears in Separate-online
and Joint-online.
Separate-EMA attains the highest mean slow recovery with
its anchor and fast recovery with concatenated features,
although substantial cross-world variation in fast recovery
limits conclusions about the ordering of mechanisms.

All configurations yield negative mean recovery of the
independent future return noise, consistent with its
unavailability at the forecast origin.
The probes therefore do not outperform the constant
reference on average for this negative control;
more negative scores do not imply better noise removal.
The clearest evidence is the complementary recovery of
known predictable components from the two branches.

\section{Revision Structure Diagnostics}
\label{app:revision_structure}

\paragraph{Cross-sectional structure.}
We analyse frozen 128-dimensional Stock-JEPA revisions
on US\_ALL without using future targets.
For the left panel, we remove daily cross-sectional means,
equally average daily covariance matrices, and convert
the resulting matrix into absolute correlations.
Over January 2024--November 2025, the displayed
representation has a mean off-diagonal absolute
correlation of 0.304 and an average daily effective
rank of 21.1.
Effective rank is $\exp(-\sum_j p_j\log p_j)$,
where $p_j$ denotes normalized singular values of
the centered daily representation matrix.

\paragraph{Temporal structure.}
The right panel displays October 2024--November 2025.
Revisions are standardized using training-set
coordinate-wise means and standard deviations,
then averaged across stocks on each prediction date.
We compute pairwise cosine distances
($1-\text{cosine similarity}$) between these daily
market-average vectors and average the distance
matrices over three runs.
Larger values indicate greater directional differences
between market-average revisions.
April 3, 2025 is marked as an event reference.

\section{Revision Response Diagnostic}
\label{app:revision_response}

We freeze the trained representation model and extract predicted
revision vectors using only information available at each
prediction date, without updating model parameters or whitening
statistics. For run $k$, let
$\widehat{\boldsymbol{\Delta}}^{(k)}_{i,t}\in\mathbb{R}^{128}$
denote the predicted revision. Its daily displacement is
\begin{equation}
D_t^{(k)}
=
\operatorname{median}_{i\in\mathcal{I}_t}
\left\|
\frac{
\widehat{\boldsymbol{\Delta}}^{(k)}_{i,t}
-
\widehat{\boldsymbol{\Delta}}^{(k)}_{i,t^-}
}{
\boldsymbol{\sigma}_{\mathrm{train}}^{(k)}
}
\right\|_2,
\end{equation}
where $t^-$ is the preceding available trading session and
$\mathcal{I}_t$ contains stocks observed on both dates.
Division is coordinate-wise; standard deviations are estimated
from 50,000 training samples and floored at $10^{-3}$.
The plotted curve is the unsmoothed mean
$D_t=\frac{1}{3}\sum_{k=1}^{3}D_t^{(k)}$.
Thus, scalar displacements, rather than latent coordinates,
are averaged across runs.

The dashed line, labelled ``99th percentile (full test),''
is the empirical 0.99 quantile of $D_t$ over all valid daily
differences in the corresponding market's complete test period:
January 2024--June 2026, comprising 598 differences for
CN\_ALL and 620 for US\_ALL.
It is computed separately for each market, including dates
outside the displayed interval.
Approximately 1\% of full-test daily displacements exceed this
reference level; it is neither a confidence interval nor a
cross-sectional percentile.
The threshold is used only for retrospective visualization,
not for training, model selection, or trading decisions.
Larger values indicate broader stock-level revision adjustments,
without specifying a bullish or bearish direction.
Selected event annotations illustrate temporal correspondence,
not causal effects or predictive anticipation.

\paragraph{Revision Responses across Markets.}
\label{app:revision_response_markets}
We examine revision responses in both CN\_ALL and US\_ALL
to assess whether the same pattern appears under different
market environments.
Major policy and macroeconomic events change the information
available to investors, creating a setting in which
context-dependent revisions should adjust the model's
prior-anchored representation.

Figure~\ref{fig:revision_response_both} shows pronounced
revision changes following the September 2024 policy package
and the April 2025 tariff escalation on CN\_ALL.
On US\_ALL, substantial changes accompany hawkish
monetary-policy repricing and renewed trade tensions.
Because the displacement statistic uses the cross-sectional
median of stock-level changes, these peaks reflect adjustments
across a broad set of assets rather than a few extreme
individual observations.
The corresponding responses across the two markets support
the role of the revision branch in incorporating changing
market information into the representation, complementing
the structured financial anchor.

\begin{figure}[!htbp]
    \centering
    \includegraphics[width=0.49\linewidth]{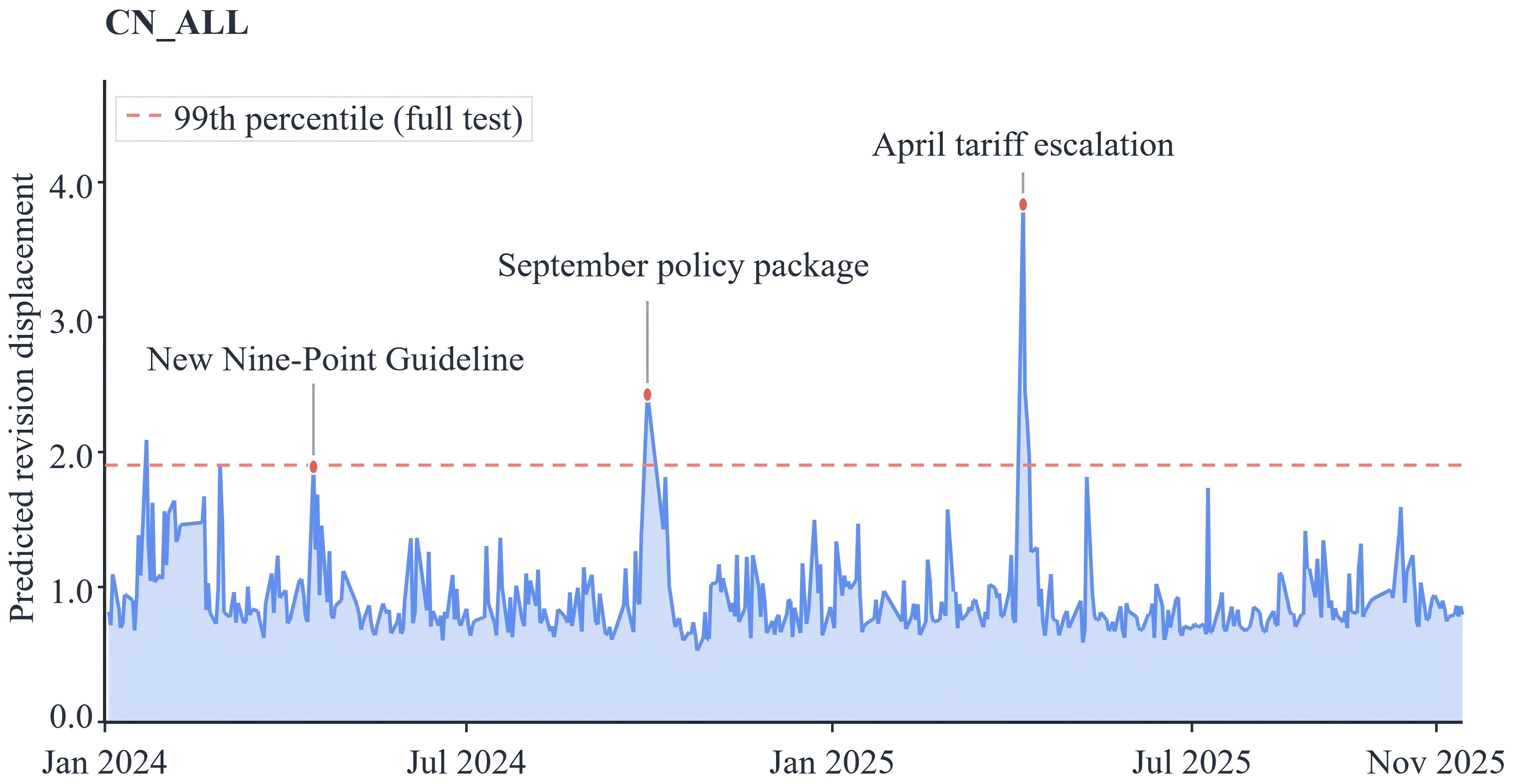}
    \hfill
    \includegraphics[width=0.49\linewidth]{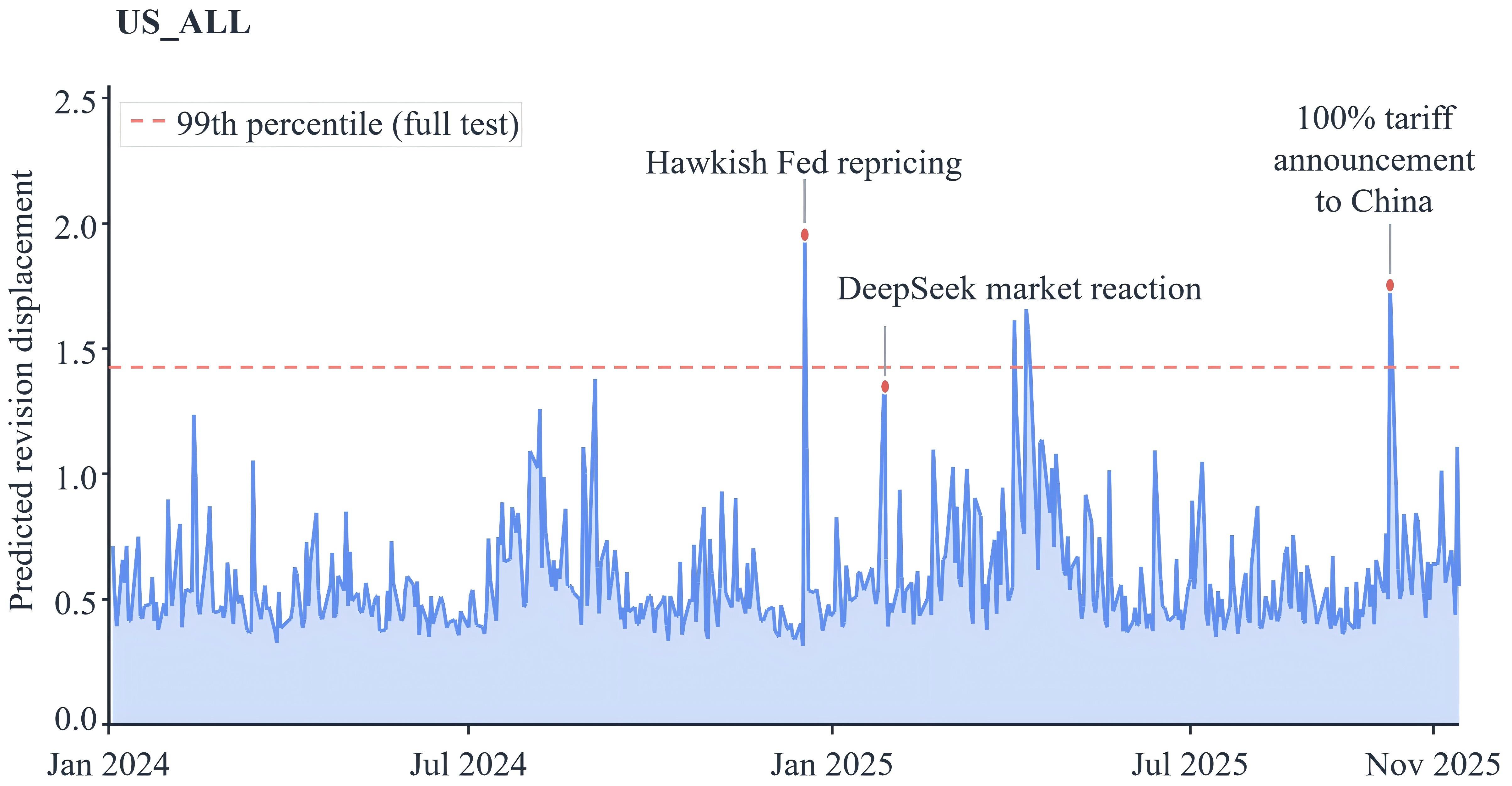}
    \caption{
    \textbf{Revision responses around major market events.}
    Daily revision displacement on CN\_ALL (left) and
    US\_ALL (right), averaged across three runs without smoothing.
    Peaks reflect broad stock-level adjustments; annotations
    identify selected market events.
    Dashed lines denote market-specific 99th percentiles
    computed over the complete test period, including dates
    outside the displayed interval.
    }
    \label{fig:revision_response_both}
\end{figure}

\begin{figure}[H]
    \centering
    \includegraphics[width=0.8\linewidth]{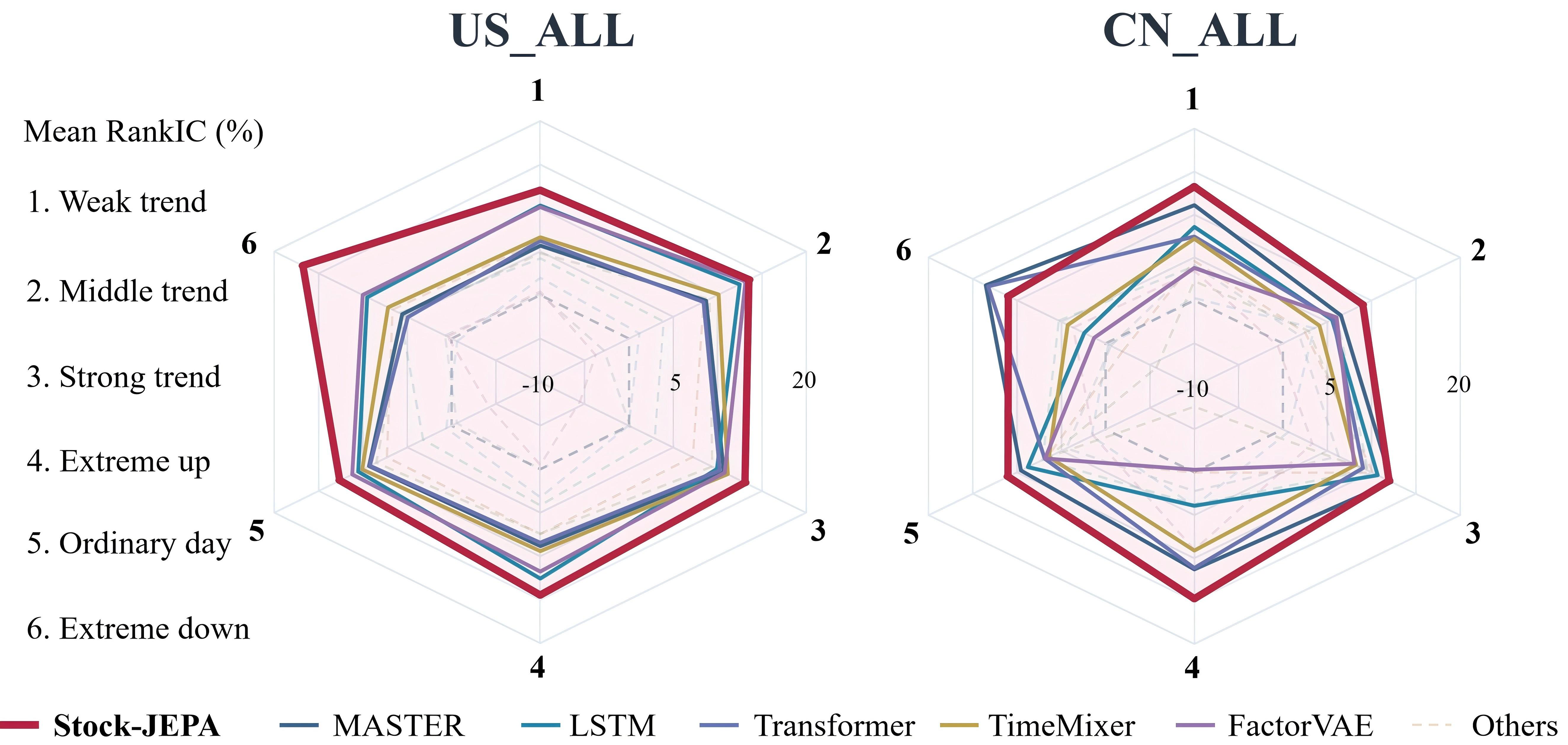}
    \caption{
    \textbf{Predictive utility across market states.}
    Mean RankIC (\%) over three runs on US\_ALL (left) and
    CN\_ALL (right), using a common radial scale.
    Trend states (axes 1--3) and daily shock states (axes 4--6)
    form separate partitions of prediction dates.
    Dashed lines denote the remaining baselines.
    }
    \label{fig:market_states}
\end{figure}

\section{Market-State Evaluation Details}
\label{app:market_states}

\paragraph{Evaluation setup.}
We reuse existing 21-day return predictions from \textsc{Stock-JEPA} and the 13 baselines, without state-specific retraining, readout adaptation, or checkpoint selection. CN\_ALL contains 599 prediction dates from January 2, 2024 to June 26, 2026; US\_ALL contains 621 dates from January 2, 2024 to June 24, 2026.

\paragraph{State definitions.}
We use the equal-weighted market return to define two separate classifications. Trend states use the trailing 63-day cumulative market return, with training-period 33rd and 67th percentiles defining weak, middle, and strong trends. Shock states use the current-day market return, with training-period 5th and 95th percentiles defining extreme down, ordinary day, and extreme up. Thresholds are estimated separately for each market and fixed during testing. Each date belongs to one trend group and one shock group; trend labels describe strength relative to the training distribution.

\paragraph{Metrics and interpretation.}
We compute daily cross-sectional Spearman correlations between predictions and subsequent 21-day cumulative log returns, then average equally across dates within each state and across three runs. Figure~\ref{fig:market_states} reports mean RankIC (\%) for both markets. \textsc{Stock-JEPA} achieves positive mean RankIC in every group, ranking first in all six US\_ALL groups and five of six CN\_ALL groups.
On extreme-down days in CN\_ALL, some baselines perform better. These exploratory results support predictive usefulness across the evaluated market conditions.

\end{document}